\documentclass[10pt]{article} % For LaTeX2e

\usepackage[preprint]{rlj} % Should be uncommented for preprint versions

\usepackage{amssymb}            % Defines common symbols like \mathbb R
\usepackage{mathtools}          % Extends amsmath, providing common math tools
\usepackage{mathrsfs}           % Enables \mathscr, which can work in cases that \mathcal does not
\usepackage{graphicx}           % For including images
\usepackage{subcaption}         % Allows for the use of subfigures and subcaptions
\usepackage[space]{grffile}     % For spaces in image names
\usepackage{url}                % For displaying URLs
\usepackage{wrapfig}

\usepackage{amsfonts}
\usepackage{amsthm}
\usepackage[capitalize,noabbrev]{cleveref}
\usepackage{tikz}
\usetikzlibrary{positioning}
\usepackage{bm}
\usepackage{gmverse}
\usepackage{booktabs} 

\usepackage{algorithmic}
\usepackage{algorithm}

\newtheorem{lemma}{Lemma}[section]
\newtheorem{proposition}{Proposition}[section]

\newtheorem{definition}{Definition}[section]

\tikzstyle{vertex}=[draw,circle,minimum size=25pt,inner sep=0pt]
\tikzstyle{selected vertex} = [vertex, fill=red!24]
\tikzstyle{edge} = [draw,thick,->]
\tikzstyle{weight} = [font=\normalsize]

\title{Learning Fair Pareto-Optimal Policies in \\Multi-Objective Reinforcement Learning}

\setrunningtitle{Fair Pareto-Optimal MORL}

\author{Umer Siddique, Peilang Li, Yongcan Cao}

\emails{muhammadumer.siddique@my.utsa.edu, \\peilang.li@my.utsa.edu, yongcan.cao@utsa.edu}

\affiliations{
\textbf{Department of Electrical and Computer Engineering, \\University of Texas at San Antonio}\\
}

\contribution{
    We introduce a novel framework for fairness in multi-policy MORL, which enables learning a set of fair policies for varying user preferences. 
    }
    {
    Prior work on fairness in MORL has mainly focused on a single policy for predefined preference weights via some welfare functions. Our framework generalizes fairness across multiple policies, which allow end users to select any policy provided by their preference weights.  
    }

\contribution{
    We provide theoretical analysis demonstrating that for concave, piecewise-linear welfare functions, fair policies remain in the convex coverage set (CCS). Additionally, we establish that non-stationary and stochastic policies can enhance fairness over stationary and deterministic policies, respectively.
    }
    {
    Existing work has explored fairness in RL for predefined preference weights but has not theoretically analyzed how non-stationary and stochastic policies can improve fairness for varying preference weights.
    }

\contribution{
    We propose three scalable methods for learning fair policies in MORL using a single parameterized network: (i) an extension of Envelope~\citep{yang2019generalized} for learning fair policies, (ii) a non-stationary extension that incorporates state-augmented accrued rewards to adaptively improve fairness, and (iii) a novel stochastic policy learning method that further enhances fairness.
    }
    {
    Unlike prior work on MORL, which typically learns Pareto optimal policies, our methods efficiently learn a set of fair policies while maintaining scalability.
    }

\keywords{Multi-objective reinforcement learning, Deep reinforcement learning, Fair optimization, Welfare functions} % Your keywords

\summary{
Fairness is important in multi-objective reinforcement learning (MORL), where policies must balance optimality and equity across objectives.
While \textit{single-policy} MORL methods can learn fair policies for fixed user preferences using welfare, they fail to generalize for different user preferences.
To address this limitation, we propose a novel framework for fairness in \textit{multi-policy} MORL, which learns a set of fair policies. 
Our theoretical analysis establishes that for concave and piecewise-linear welfare functions, fair policies remain in the convex coverage set (CCS). Additionally, we demonstrate that non-stationary and stochastic policies improve fairness over stationary and deterministic policies.
Building on our theoretical analysis, we introduce three scalable methods: an extension of Envelope for fair stationary policies, a non-stationary counterpart using state-augmented accrued rewards, and a novel extension for learning stochastic policies. We validate our methods through extensive experiments across three domains and show that our methods fairer solutions as compared to MORL baselines.
}

\newcommand{\St}{{\mathcal S}}
\newcommand{\Ac}{{\mathcal A}}
\newcommand{\Tm}{{\mathcal P}}
\newcommand{\Md}{{\mathcal M}}
\newcommand{\T}{{\bm P}}
\newcommand{\R}{{\bm r}}
\newcommand{\vvf}{{\bm V}} % vector value function, which is a matrix S x D
\newcommand{\vQ}{{\bm Q}} % 
\newcommand{\vG}{\bm G} % vectorial return 
\newcommand{\vJ}{\bm J} % vectorial objective function
\newcommand{\w}{{\bm \omega}} % GGI weights
\newcommand{\nO}{N} % number of objectives
\newcommand{\swf}{\phi} %social welfare function
\newcommand{\ggf}{\phi_{\text{GGF}}}

\usepackage{amsmath}%argmax
\DeclareMathOperator*{\argmax}{argmax}
\newcommand{\Expect}{\mathbb E}
\newcommand{\SD}{\mathbb S}

\begin{document}

\makeCover  % Create the cover page
\maketitle  % Make the title section

\begin{abstract}
Fairness is an important aspect of decision-making in multi-objective reinforcement learning (MORL), where policies must ensure both optimality and equity across multiple, potentially conflicting objectives. 
While \emph{single-policy} MORL methods can learn fair policies for fixed user preferences using welfare functions such as the \emph{generalized Gini welfare function} (GGF), they fail to provide the diverse set of policies necessary for dynamic or unknown user preferences. 
To address this limitation, we formalize the fair optimization problem in \textit{multi-policy} MORL, where the goal is to learn a set of Pareto-optimal policies that ensure fairness across all possible user preferences.
Our key technical contributions are threefold: (1) We show that for concave, piecewise-linear welfare functions (e.g., GGF), fair policies remain in the \emph{convex coverage set} (CCS), which is an approximated Pareto front for linear scalarization. (2) We demonstrate that non-stationary policies, augmented with accrued reward histories, and stochastic policies improve fairness by dynamically adapting to historical inequities. (3) We propose three novel algorithms, which include integrating GGF with multi-policy multi-objective Q-Learning (MOQL), state-augmented multi-policy MOQL for learning non-statoinary policies, and its novel extension for learning stochastic policies.
We evaluate our algorithms across various domains and compare our methods against the state-of-the-art MORL baselines.
The empirical results show that our methods learn a set of fair policies that accommodate different user preferences.
\end{abstract}

\section{Introduction}
\label{sec:intro}
Multi-objective reinforcement learning (MORL) is an important topic in the area of reinforcement learning  (RL) that focuses on designing control policies to optimize multiple objectives simultaneously. While traditional MORL methods focus on learning Pareto optimal solutions---ensuring no objective can be improved without sacrificing another---they often neglect fairness, which requires equitable treatment of all objectives or users in our context. For example, in healthcare, a policy may aim to maximize overall patient outcomes (optimality) while ensuring equal treatment across different demographic groups (fairness). A common approach to solving fairness in MORL is to use \textit{utilitarian} welfare functions, where user utilities are aggregated, typically via weighted sum, into a scalarized objective. Despite its simplicity, this approach struggles with fairness, as some users' utilities may be significantly reduced to achieve overall efficiency. An alternative approach is to employ an \textit{egalitarian} welfare function, which prioritizes the least advantaged user by maximizing the minimum utility. While this approach improves fairness, it often leads to inefficient solutions overall, as it optimizes only the lowest utility without ensuring fairness across all objectives.

Several works have explored fairness in the \textit{single-policy} RL setting~\citep{Weng19,SiddiqueWengZimmer20,zimmer2021learning,chen2021guide,do2022optimizing, fan2022welfare, YuSiddiqueWeng23ECAI, nashed2023fairness}, where a single fair policy is learned. For instance, \citet{SiddiqueWengZimmer20} enforces fairness using the GGF as a scalarized function and assigning appropriate weights to different objectives to ensure their equitable treatment. Extensions have been explored in multi-agent RL~ \citep{zimmer2021learning,siddique2024towards} and preferential treatment under known preference weights~\citep{YuSiddiqueWeng23ECAI}. Recently, fairness has been studied in multi-policy MORL~\citep{cimpeana2023multi,michailidis2024scalable} where~\citet{cimpeana2023multi} defined several fairness notions, while~\citep{michailidis2024scalable} proposed the Lorenz Condition Network (LCN), an extension of the Pareto Conditioned Network (PCN), which trains a policy network in a supervised manner to map states to desired returns. Despite these works, the investigation of fairness in RL still poses some limitations, including (1) learning a \textit{single} fair policy, (2) required knowledge of the welfare function (e.g., scalarized function) with preference weights a prior, and (3) training a conditioning network on specific return targets, limiting their ability to generalize to unseen preferences. 
Hence, existing methods operate under fixed preferences and cannot be generalized for all possible preferences.

To address these limitations, we propose a novel framework for addressing fairness in \textit{multi-policy} MORL, rather than the traditional \textit{single-policy} MORL that is the focus of existing work. Our methods are highly scalable as they leverage a single parameterized network to learn an undominated set of policies, specifically a convex coverage set (CCS), by sampling the entire preference space in MORL. In particular, to address fairness, we apply the welfare function (e.g., GGF) during learning for each sampled preference weight to ensure that each learned policy treats its objectives fairly. We further introduce non-stationary action selection using the state-augmented accrued rewards to enhance fairness by effectively utilizing historical information. We further demonstrate the benefits of learning stochastic policies for fairness. Motivated by hindsight experience replay~\citep{andrychowicz2017hindsight}, we incorporate resampling of random preference weights across different preference conditions to improve sample efficiency in MORL, as it is done in~\citep{yang2019generalized}.

The main contributions of this paper are as follows:
\begin{enumerate}
\item We introduce a novel framework for fairness in multi-policy MORL, enabling users to select any fair policy based on their specific preferences, thereby enhancing user satisfaction(~\Cref{sec:fairness}).
\item We provide theoretical analysis establishing that for concave, piecewise-linear welfare functions (e.g., GGF), fair policies remain in CCS. Additionally, we demonstrate that non-stationary policies can improve fairness by adapting to historical disparities and that stochastic policies further improve fairness over deterministic policies(~\Cref{sec:theo}).
\item Building on our theoretical insights, we propose three scalable methods for learning fair policies in MORL using a single parameterized network: (i) an extension to Envelope~\citep{yang2019generalized} for learning fair stationary policies, (ii) a non-stationary counterpart that incorporates state-augmented accrued rewards to improve fairness over time adaptively, and (iii) a novel extension for learning stochastic policies, which further enhances fairness(~\Cref{sec:algos}).
\item We experimentally validate our methods and demonstrate their effectiveness compared to state-of-the-art MORL and fairness methods across three different domains(~\Cref{sec:experimets}).
\end{enumerate}

\section{Related Work}
\label{sec: relatedwork}
Fairness in machine learning (ML) has become a significant research direction~\citep{dworkFairnessAwareness2012,zafarParityPreferencebasedNotions2017,sharifi-malvajerdiAverageIndividualFairness2019,singhPolicyLearningFairness2019,chierichettiFairClusteringFairlets2017,busa2017multi,AgarwalBeygelzimerDudikLangfordWallach18,NabiMalinskyShpitser19,zhang2021fairness}. Several studied have addressed fairness in model predictions~\citep{SpeicherHeidariGrgicHlacaGummadiSinglaWellerZafar18}, recommender systems~\citep{leonhardt2018user}, classification~\citep{dworkFairnessAwareness2012, zafarParityPreferencebasedNotions2017, AgarwalBeygelzimerDudikLangfordWallach18, kim2019multiaccuracy}, and ranking~\citep{singhPolicyLearningFairness2019}.
While much of the literature focuses on the principle of ``equal treatment of equals'', other aspects, such as proportionality~\citep{propfairness_AAMAS20} or envy-freeness~\citep{chevaleyreIssuesMultiagentResource2006} and its multiple variants (e.g.,~\citep{p292,p231}), have been considered in ML. 
In contrast, our work is grounded in distributive justice~\citep{Rawls71,
bramsFairDivisionCakeCutting1996, Moulin04}, with a focus on optimizing a welfare function for fairness considerations. This principled approach has also been recently advocated in several papers~\citep{HeidariFerrariGummadiKrause18,SpeicherHeidariGrgicHlacaGummadiSinglaWellerZafar18,malfareNeurips2021}.

Recently, fairness in RL has gained significant attention with the work by~\citep{jabbari2017fairness}, which ensures fairness in state visitation using scalar rewards. The work of \citep{Jiang2019} proposed FEN, a hierarchical decentralized approach using gossip algorithms to ensure fairness among agents. Similarly, \citet{chen2021bringing} proposed to incorporate fairness into actor-critic RL algorithms, optimizing general fairness utility functions for real-world network optimization problems. Considering the multi-objective nature of many RL problems, the study of fairness in MORL has been widely studied. In particular, \citet{SiddiqueWengZimmer20} proposed multiple adaptations to deep RL algorithms that optimize the GGF. \citet{zimmer2021learning,siddique2024fairness} extended this to the decentralized cooperative MARL. ~\citet{fan2022welfare} proposed to optimize the Nash welfare function using scalarized expected return criterion, while \citet{do2022optimizing} proposed to optimize GGF in rankings. ~\citet{YuSiddiqueWeng23ECAI,qian2025fair} proposed methods that learn a fair policy providing preferential treatment to some users while ensuring equal treatment of all others under the assumption that these preferential weights are known in advance. ~\citet{siddique2023fairness} proposed FPbRL, which learns fair preference-based policies without true rewards. Recently, fairness has been considered in multi-policy MORL with~\citet{michailidis2024scalable} propose learning Lorenz Condition networks, which ensures fairness through Lorenz domination and adds an extra parameter $
\lambda$, however, we use the welfare function to learn a set of fair optimal policies. 

Despite the significant successes achieved in the field of deep RL and MORL, existing methods heavily rely on scalarization functions to learn a \textit{single policy} with fixed preference weights. However, such single-policy methods do not work when preferences are unknown or user-specific solutions are required. To address this limitation, several works have been proposed to accommodate user-specific preferences, including but not limited to those proposed by~\citep{BarrettNarayanan08,van2013scalarized,MoffaertNowe14,yang2019generalized,alegre2023sample,reymond2022pareto}. Notably, these methods aim to learn a set of policies that approximate the Pareto frontier of optimal solutions.
For instance, ~\citep{BarrettNarayanan08} and \citep{ MoffaertNowe14} proposed methods to compute policies on the Pareto front's convex hull, while~\citep{yang2019generalized} introduced envelope Q-learning, learning policies from the convex coverage set (CCS). These approaches, however, do not address fairness, which is the focus of this paper.

\section{Preliminaries}
\label{sec:preliminaries}
\subsection{Multi-Objective Markov Decision Process}
\label{sec:mdp}
A multi-objective Markov Decision Process (MOMDP) extends the classical MDP framework to scenarios where an agent must optimize multiple objectives simultaneously. An MDP~\citep{Puterman94} is defined by the tuple, $\Md = (\St, \Ac, \Tm, r, \gamma)$, where $\St$ is the set of states, $\Ac$ is the set of actions available to the agent, $\Tm_{a,s,s'} \in [0,1]$ is the probability of transition from state $s$ to state $s'$ after taking action $a$, \textit{i.e.}, $\Tm(s'|s,a) = \Tm[S_{t+1} = s' | S_t = s, A_t = a]$, $r(s,a):s\times a\mapsto r$ is the immediate reward obtained by taking action $a$ at state $s$, and $\gamma \in [0,1)$ is the discount factor. 
An MOMDP can be represented by a tuple  $\Md = (\St, \Ac, \Tm, \R, \gamma, \Omega, f_{\Omega})$, in which the definitions of $\St, \Ac, \Tm,$ and $\gamma$ are the same as in MDP except that the reward $\R$ is now a vector, with each component corresponding to an objective that the agent seeks to optimize.
Here, the additional $\Omega$ represents the entire space of preferences, and $f_{\Omega}$ is the preference function which takes a linear form, producing a single utility $f_{\w}(\R)=\w^T \R(s,a)$, where $\w$ is a vector representing the preference weights for different objectives. 
In MOMDPs, the objectives may be conflicting, and hence it is often difficult to optimize all objectives simultaneously. 

The goal of an agent in an MOMDP is to either learn a single policy that balances multiple objectives or a set of policies that optimize different trade-offs among objectives. These approaches are referred to as \textit{single-policy} MORL and \textit{multi-policy} MORL, respectively.
A policy $\pi$ is a strategy that maps states to actions, which can be deterministic (i.e., $\forall s, \pi(s) \in \Ac$) or stochastic (i.e., $\forall s,a,\pi(a | s)$ denotes the probability of selecting $a$ in $s$).
In MOMDPs, policies are typically \textit{stationary} (Markovian), with action probabilities depending only on the current state, while a non-stationary (adaptive) policy $\pi(a|\tau, s)$ may also depend on the history $\tau$.
Standard definitions in MDPs, such as the return $G(\tau)$ and the value functions $V$ or $Q$, extend naturally to MOMDPs, albeit represented as vectors and matrices respectively. The vector return in an MOMDP is expressed as $\vG(\tau)= \sum_{t=1}^\infty \gamma^{t-1} \R_t$,
where $\tau$ is a trajectory comprising a sequence of states, actions, and rewards following the policy, and $\R_t$ is a vector reward obtained at time step $t$. The state value function of a policy $\pi$ in an MOMDP is defined as
$
\vvf^{\pi} (s) = [V^{\pi}_i (s)]= \Expect_{\tau \sim \pi}\left[\sum_{t=1}^{\infty} \gamma^{t-1} \R_t \mid S_0 = s\right],    
$
where all operations (addition, product) are applied component-wise. 

In MOMDPs, value functions do not offer a complete ordering over the policy space.
This means it is possible to encounter scenarios wherein, e.g., $V^{\pi}_i(s) > V^{\pi'}_i(s)$  for objective $i$, while $V^{\pi}_j(s) < V^{\pi'}_j (s)$ for objective $j$. 
Hence, value functions in MOMDPs induce only a partial ordering within the policy space, necessitating additional information into objective prioritization for policy ordering.

\paragraph{Envelope Multi-Objective Q-Learning.}
\label{sec:envelope}
The Envelope algorithm~\citep{yang2019generalized} learns a convex coverage set (CCS) by sampling preference weights $\w \in \Omega$ and optimizing linearly scalarized Q-values: $Q(s, a, \w) = \w^T \vQ(s, a)$, where $\vQ(s, a) \in \mathbb{R}^{\nO}$ is the vector of Q-values for $\nO$ objectives. The Bellman optimality equation for Envelope algorithm is:
$
    \vQ^*(s, a, \w) = \R(s,a) + \gamma \max_{a'} \w^T \vQ^*(s', a').
$
A single neural network parameterizes $\vQ(s, a, \w)$ by concatenating $\w$ to the state $s$, enabling efficient learning across all preferences. Despite its scalability, Envelope lacks explicit fairness guarantees, as linear scalarization may prioritize dominant objectives.

\subsection{Fairness Formulation}
\label{sec:fairness}
In MORL, fairness, rooted in distributive justice~\citep{Moulin04}, is crucial for ensuring equitable distribution of rewards.
Prior studies in fair optimization within MORL have primarily focused on learning a \textit{single-policy}, commonly referred to as an average policy~\citep{SiddiqueWengZimmer20, fan2022welfare, YuSiddiqueWeng23}. 
In this paper, we adopt a more inclusive view of fairness, including \emph{efficiency}, \emph{equity}, and \emph{impartiality} to generate fair optimal solutions for user-specific preferences. For discussion on fairness and welfare function, please refer to the Appendix.
\begin{definition}\label{def:eff}
Efficiency states that among two solutions, if one solution is (weakly or strictly) preferred by all users, then it should be preferred to the other one, e.g., $\vvf \succ \vvf^\prime \Rightarrow \swf(\vvf) > \swf(\vvf^\prime)$, where $\swf(\vvf)$ is the scalar utility function by using the $\swf$ that specifies the value of a solution.
\end{definition}
The efficiency property specifies that given all else equal, one prefers to increase a user’s utility. In the MORL setting, the efficiency property simply means Pareto dominance. More specifically, a solution is considered efficient if it is not dominated by any other solution for all objectives.

\begin{definition} \label{def:paretodominates}
For a given pair of solutions $\vvf, \vvf^\prime \in \mathbb R^\nO$, $\vvf$ \textit{weakly Pareto-dominates} $\vvf^\prime$ if $\forall i, V_i \ge V^\prime_i$, $\forall i\in\{1,\cdots, \nO \}$, where $\nO$ is the total number of objectives.
Besides, $\vvf$ \textit{Pareto-dominates} $\vvf^\prime$ if $V_i \ge V^\prime_i, \forall i$ and $\exists j, V_j > V^\prime_j$.
For brevity, we denote Pareto dominance as $\ge$ for the weak form and $>$ for the strict form.
\end{definition}

Essentially, a solution $\vvf$ (weakly) Pareto-dominates another solution $\vvf^\prime$ if the former's value $\swf(\vvf)$ (weakly) Pareto-dominates that of the latter $\swf(\vvf^\prime)$.
A solution $\vvf^*$ is said to be \textit{Pareto-optimal} if no other solution $\vvf$ Pareto-dominates it.
\textit{Pareto front} $(\mathcal{F})$ is defined as the set of Pareto-optimal solutions, which may consist of infinitely many solutions, especially when policies can be stochastic. A typical way to approximate $(\mathcal{F})$ is to compute the convex coverage set (CCS), defined below.
\begin{definition}
A solution in CCS has a maximal scalarized value in a weighted sense if there exists a weight vector $\w \in \Omega$ such that the scalarized utility $\w^T \vvf$ is weakly preferred to the scalarized utility $\w^T \vvf'$ for all other solutions $\vvf^\prime$ in the Pareto front. Formally speaking,
$\vvf \in \text{CCS} \iff \exists\ \w \in \Omega \text{ s.t. } \w^T \vvf \geq \w^T \vvf^\prime, \forall\ \vvf^\prime \in \mathcal{F}. $
\end{definition}

Next, we discuss the significance of the \textit{equity} property, a stronger property than efficiency and often associated with distributive justice, as it refers to the fair distribution of resources or opportunities.
This property ensures that a fair solution follows the \textit{Pigou-Dalton principle}~\citep{Moulin04}, which states the transferring of rewards from more advantaged users to less advantaged users. 
\begin{definition}%\label{def:pigoudalton}
A solution satisfies the \textit{Pigou-Dalton principle} if for all $\vvf$, $\vvf^\prime$ equal except for $V_i=V_i'+\delta$ and  $V_j=V_j'-\delta$ where $V_i'-V_j' > \delta > 0$, $\swf(\vvf) > \swf(\vvf^\prime)$.
\end{definition}
Finally, the \textit{impartiality} property, which is rooted in the principle of ``equal treatment of equals'' states that individuals sharing similar characteristics should be treated similarly.  

\begin{definition}
In a system, individuals with similar characteristics should be treated similarly, i.e., the solution should be independent of the order of its arguments $\swf(\vvf) = \swf(\vvf_{\sigma})$, where $\sigma$ is a permutation and $\vvf_{\sigma}$ is the vector obtained from vector $\vvf$ permuted by $\sigma$. 
\end{definition}

To ensure fairness that satisfies the above three properties, we use a well-known generalized Gini welfare function (GGF)~\citep{Weymark81}, which can be defined as:
\begin{align} \label{eq:ggi}
    \ggf(\bm u) = \sum_{i \in \nO} \omega_i  u_i^\uparrow,
\end{align}
$\bm u \in \mathbb R^\nO$ represents the utility vector of a size $\nO$ for $\nO$ objectives, $\w \in \mathbb R^\nO$ is a fixed weight vector with positive components that strictly decrease (i.e., $\omega_1 > \ldots > \omega_\nO$) with $\sum_i w_i = 1$, and $\bm u^\uparrow$ denotes the vector by sorting the components of $\bm u$ in an increasing order (i.e., $ u^\uparrow_1 \le \ldots \le  u^\uparrow_\nO$).
GGF satisfies the aforementioned three fairness properties. As the weights are positive, it is monotonic with respect to Pareto dominance, thus satisfying the efficiency property. Since the utility vector is reordered, it is also symmetric and therefore satisfies the impartiality property. Furthermore, the positive and decreasing weights ensure that GGF is Schur-concave, i.e., monotonic with respect to Pigou-Dalton transfers, therefore satisfies the impartiality property. 

GGF has been studied and used in MORL extensively~\citep{SiddiqueWengZimmer20,mandal2022sociallufair,YuSiddiqueWeng23,qian2025fair}, however, prior works have focused exclusively on the single-policy setting. To our knowledge, we are the first to apply GGF in a multi-policy MORL context. In multi-policy MORL, the standard approach is to identify all Pareto non-dominated solutions~\citep{MukaiKuroeIima12,van2014multi}; however, this is impractical for large-scale problems, as the Pareto front grows exponentially. A more scalable alternative is to approximate the CCS, which forms the convex envelope of optimal trade-offs

\section{Fairness in MORL} 
\label{sec:theo}
Since we are in a multi-policy MORL setting, where an agent learns a set of Pareto optimal policies, fairness becomes more important as different stakeholders may have different preferences, and during inference, any solution can be used from the Pareto non-dominated solutions given the stakeholder preferences. We formalize this sophisticated multi-policy fair optimization problem as: %as a family of convex optimization problems:
\begin{align} \label{eq:fair-morl}
    \forall \w \in \Omega, \quad \max_{\pi \in \Pi} \,\, \ggf(\vJ(\pi)), 
\end{align}
where $\Omega$ is the set of valid preference weights sorted in descending order, $\vJ(\pi) = \Expect_{\pi} [\sum_{t=0}^\infty \gamma^t \R_t]$ is the expected discounted return, and $\ggf(\bm{J}) = \sum_{i=1}^\nO w_i J_{(i)}$ with $J_{(1)} \leq \cdots \leq J_{(n)}$. The concavity of GGF makes problem~\eqref{eq:fair-morl} as convex optimization problem, enabling efficient solutions within the CCS. Below, we establish three foundational results, which show that it is always feasible to obtain optimal solutions in the CCS corresponding to GGF fair optimization. Next, we demonstrate that a non-stationary policy based on accrued rewards is beneficial in yielding improved fairness when compared with its stationary counterpart. Here, a policy yields improved fairness or is fairer if a higher welfare score, defined in~\eqref{eq:ggi}, is achieved. Lastly, we show that a stochastic policy may yield fairer solutions than a deterministic one.

\paragraph{\textbf{Sufficiency of Optimal Solutions in the CCS.}} 
The first question relates to the learning of fair policies in a multi-policy MORL setting is which subset of policies may be optimal among the set of all (possibly non-stationary) policies. Indeed, for linear scalarization function, CCS contains the set of Pareto front solutions. Below, we formally state it:
\begin{lemma} \label{le: op in CCS}
For any MOMDP with linear preferences over objectives,  the CCS contains an optimal policy for any linear combination of the objectives.
\end{lemma}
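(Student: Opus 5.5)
The plan is to reduce the statement to the standard fact that a linearly scalarized MOMDP is an ordinary single-objective MDP, and then use the definition of the CCS directly. Fix an arbitrary preference $\w \in \Omega$. Because the preference function is linear, $f_{\w}(\R) = \w^T \R(s,a)$, and expectation is linear. Hence for every (possibly non-stationary, stochastic) policy $\pi$ we have
\begin{align*}
\w^T \vJ(\pi) = \Expect_{\pi}\Big[\sum_{t=0}^\infty \gamma^t\, \w^T \R_t\Big].
\end{align*}
So maximizing $\w^T \vJ(\pi)$ over $\Pi$ is exactly the problem of maximizing expected discounted return in the scalar MDP $(\St,\Ac,\Tm,\w^T\R,\gamma)$. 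By standard MDP theory \citep{Puterman94}, with $\gamma<1$, bounded rewards, and finite (or suitably compact) state and action spaces, this scalar MDP admits an optimal policy $\pi^*_{\w}$. That policy may be taken stationary and deterministic, and it attains the supremum over all history-dependent randomized policies. Let $\vvf^* = \vJ(\pi^*_{\w})$.

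Next, I will show that $\vvf^*$ lies in the CCS. By optimality, $\w^T \vvf^* \ge \w^T \vJ(\pi)$ for every policy $\pi$. In particular this holds for every $\vvf' \in \mathcal{F}$, since each Pareto-front point is the value of some policy.

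The CCS definition also asks about Pareto optimality, so I must check that $\vvf^*$ is itself in $\mathcal{F}$, or can be replaced by a point of $\mathcal{F}$ with the same scalarized value.
\begin{itemize}
\item If $\w$ has strictly positive components, suppose some achievable $\vvf$ Pareto-dominates $\vvf^*$. Then $\w^T\vvf > \w^T\vvf^*$, contradicting optimality. So $\vvf^* \in \mathcal{F}$.
\item If some weights are zero, $\vvf^*$ may only be weakly Pareto-optimal. In that case I take a Pareto-optimal $\vvf''$ that dominates $\vvf^*$. Such a point exists because the achievable value set is compact. Its scalarized value satisfies $\w^T\vvf'' \ge \w^T\vvf^*$, hence equality by optimality. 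Then $\vvf''$ is still $\w$-optimal, and I use it in place of $\vvf^*$.
\end{itemize}
Either way, the chosen point satisfies the CCS membership condition for this $\w$. The corresponding policy is therefore an optimal policy for the linear combination $\w$ that lies in the CCS. Since $\w$ was arbitrary, the lemma follows.

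The main obstacle is this last step: handling non-strictly-positive weights, and justifying that the set of achievable value vectors is compact so that the supremum is attained and a dominating Pareto-optimal point exists.
\begin{itemize}
\item For attainment, I would rely on the existence of an optimal stationary deterministic policy, which gives attainment directly.
\item For compactness, I would use the fact that for finite MOMDPs the achievable set is the convex hull of the finitely many values of stationary deterministic policies, by the occupancy-measure polytope. That set is a compact polytope, so a dominating Pareto-optimal point exists.
\end{itemize}
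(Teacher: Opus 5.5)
Your proof is correct, and it takes a different and more careful route than the paper's.

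\textbf{The paper's route.} The paper fixes $\w$ in the simplex and writes the $\w$-optimal policy as $\argmax_\pi \w^T\vvf^\pi(s)$. It then obtains a maximizer inside the CCS by appealing to the CCS definition together with Convex Hull Value Iteration (CHVI). The CHVI backup maintains the convex hull of achievable value vectors at every state, that hull is identified with the CCS, and so every linear preference is maximized there. The CCS definition, however, only says each member is optimal for \emph{some} $\w$. The converse, that every $\w$ has a maximizer in the CCS, is exactly what the lemma asserts, and the paper in effect delegates it to the correctness of CHVI.

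\textbf{Your route.} You prove it directly. Linearity of expectation turns $\max_\pi \w^T\vJ(\pi)$ into a scalar MDP, and standard theory gives an attained optimum over all history-dependent randomized policies. You then verify membership in $\mathcal{F}$. The CCS definition implicitly requires this ($\text{CCS}\subseteq\mathcal{F}$, as in Envelope), but the paper's proof never checks it. Two details you supply are missing from the paper:
\begin{itemize}
\item Boundary weights: a $\w$-optimal value may be only weakly Pareto-optimal there, and you pass to a dominating Pareto-optimal point using compactness of the occupancy-measure polytope.
\item Finiteness: your assumption is genuinely needed, since without some finiteness or compactness a $\w$-optimal policy need not exist at all.
\end{itemize}

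\textbf{What each buys.} The paper's route gives a constructive picture that holds at all states at once; its conclusion is stated for all $s\in\St$ simultaneously. Your argument works per fixed initial state. That matches how the CCS, a set of vectors in $\mathbb{R}^{\nO}$, is defined and how the lemma is used in the subsequent GGF proposition. If you want a single policy that works from every state even when $\w$ has zero entries, add a tie-break. Restrict to $\w$-greedy actions and maximize $\sum_i V_i$ in the restricted MDP. The resulting stationary policy is $\w$-optimal and Pareto-optimal from every state.
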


While GGF introduces non-linear fairness objectives, its piecewise linearity and concavity allow it to be expressed as a maximum over linear functions, which ensures that optimal solutions lie within the CCS. The following proposition establishes the sufficiency of the CCS in representing optimal policies for $\ggf$ preference weights.

\begin{proposition} \label{prop:CCS}
For any $s \in \St$ in an MOMDP and a piecewise-linear concave welfare function $\ggf$ (e.g., GGF) that can be represented as, $\ggf(\vvf^\pi(s)) = \min_{\sigma \in \SD_{\nO}} \bigl\{ \w^\top_{\sigma} \vvf^\pi(s) \bigr\}$,  there exists a policy $\pi^* \in \text{CCS}$ such that
$
    \ggf(\vvf^{\pi^*}(s)) \geq \ggf(\vvf^\pi(s)), \quad \forall \pi \in \Pi.
$
\end{proposition}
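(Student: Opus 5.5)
The plan is to write $\ggf$ as a minimum of linear functions and use a minimax (saddle-point) argument. This produces a \emph{single} linear preference $\w^\star$ for which the $\ggf$-optimal value vector is also a maximizer of $\w^{\star\top}\vvf$. That places it in the CCS, and \Cref{le: op in CCS} supplies the policy.

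\textbf{Step 1 (the achievable set).} Fix $s\in\St$. Let $\mathcal V(s)=\{\vvf^\pi(s):\pi\in\Pi\}$, where $\Pi$ ranges over all, possibly stochastic and non-stationary, policies. By the standard occupancy-measure argument for discounted MDPs, $\mathcal V(s)$ is the image of the compact convex polytope of discounted state--action occupancy measures under a linear map. Hence it is a compact convex polytope, namely the convex hull of the values of deterministic stationary policies. Since $\ggf$ is continuous, a maximizer $\vvf^\star\in\mathcal V(s)$ of $\ggf$ exists.

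\textbf{Step 2 (minimax).} Write
\[
\max_{\vvf\in\mathcal V(s)}\ggf(\vvf)=\max_{\vvf\in\mathcal V(s)}\min_{\sigma\in\SD_\nO}\w_\sigma^\top\vvf=\max_{\vvf\in\mathcal V(s)}\min_{\lambda\in\Delta(\SD_\nO)}\Bigl(\sum_\sigma\lambda_\sigma\w_\sigma\Bigr)^{\!\top}\vvf .
\]
The payoff is bilinear in $(\vvf,\lambda)$, and both sets are compact and convex. Von Neumann's (or Sion's) minimax theorem therefore gives a saddle point $(\vvf^\star,\lambda^\star)$. Set $\w^\star=\sum_\sigma\lambda^\star_\sigma\w_\sigma$. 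The saddle property gives
\[
\w^{\star\top}\vvf\le\w^{\star\top}\vvf^\star \quad \text{for all } \vvf\in\mathcal V(s),
\]
and $\vvf^\star$ is a $\ggf$-maximizer. Moreover, $\w^\star$ is a convex combination of permutations of a positive vector summing to one. So $\w^\star$ is a valid preference with strictly positive entries.

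\textbf{Step 3 (membership in the CCS).} Because $\w^\star>0$ componentwise and $\vvf^\star$ maximizes $\w^{\star\top}\vvf$, $\vvf^\star$ is Pareto-optimal: any point dominating it would strictly increase $\w^{\star\top}\vvf$. Thus $\vvf^\star\in\mathcal F$. It also satisfies $\w^{\star\top}\vvf^\star\ge\w^{\star\top}\vvf'$ for all $\vvf'\in\mathcal F$, which is exactly the definition of the CCS. Applying \Cref{le: op in CCS} with the linear preference $\w^\star$, the CCS contains a policy $\pi^\star$ with $\vvf^{\pi^\star}(s)=\vvf^\star$. Hence $\ggf(\vvf^{\pi^\star}(s))\ge\ggf(\vvf^\pi(s))$ for all $\pi\in\Pi$.

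\textbf{Main obstacle.} The delicate point is the saddle-point step together with what ``policy in the CCS'' means. The $\ggf$-optimum is generally \emph{not} a vertex of $\mathcal V(s)$; it can lie in the relative interior of a face, and is then achieved only by a mixture or stochastic policy. The argument must therefore treat the CCS as the set of supported points of the full convex set $\mathcal V(s)$, not only of deterministic policies. It also needs $\Omega$ to contain the induced weight $\w^\star$, which need not be sorted. I would handle this by taking $\Omega$ to be the full simplex when defining the CCS, as in Envelope. I would also verify the saddle point explicitly via LP duality on the occupancy-measure LP, as a fallback to the abstract minimax theorem.
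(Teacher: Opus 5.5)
Your proof is correct, and it differs from the paper's at the step that carries the weight.

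The paper fixes one permutation $\sigma_A$ and uses the lemma to obtain a CCS policy $\pi^*$ maximizing $\w_{\sigma_A}^\top\vvf^\pi(s)$. It then concludes for $\ggf$ ``because this holds for any permutation.'' But $\pi^*$ depends on $\sigma_A$, and $\ggf(\vvf^{\pi^*}(s))=\min_\sigma\w_\sigma^\top\vvf^{\pi^*}(s)$ can be strictly below $\w_{\sigma_A}^\top\vvf^{\pi^*}(s)$. For example, with deterministic values $(1,0)$, $(0,1)$ and weights $(2/3,1/3)$, each per-permutation maximizer has $\ggf=1/3$, while the mixture $(1/2,1/2)$ attains $1/2$. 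The paper's final line also only reaches the weaker order ``for every $\pi$ there is a $\pi^*$.''

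Your minimax step supplies the missing idea. The $\ggf$-optimum is supported by a single weight $\w^\star=\sum_\sigma\lambda^\star_\sigma\w_\sigma$, which is a convex combination of the permuted weights rather than one of them; in the example, $\w^\star=(1/2,1/2)$. This handles optima in the relative interior of a face and yields the stated order $\exists\pi^*\,\forall\pi$. Since $\w^\star>0$, it also gives Pareto-optimality directly. The paper's route is shorter and stays inside the lemma, but as written it does not establish the statement.

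Your caveats are the conditions under which the statement holds. Take $\Omega$ to be the full simplex, as in the paper's proof of the lemma, and let $\mathcal F$ and the CCS include values of stochastic policies.

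One small correction concerns Step 3: the lemma is neither needed nor sufficient there. It only supplies \emph{some} $\w^\star$-optimal policy, possibly with a different value vector on the same face. Instead, take any $\pi^\star$ with $\vvf^{\pi^\star}(s)=\vvf^\star$, which exists because $\vvf^\star\in\mathcal V(s)$. You have already verified that $\vvf^\star$ satisfies the CCS definition.
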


\paragraph{Example 4.1}
\begin{figure}[t]
    \centering
    \includegraphics[width=0.75\linewidth]{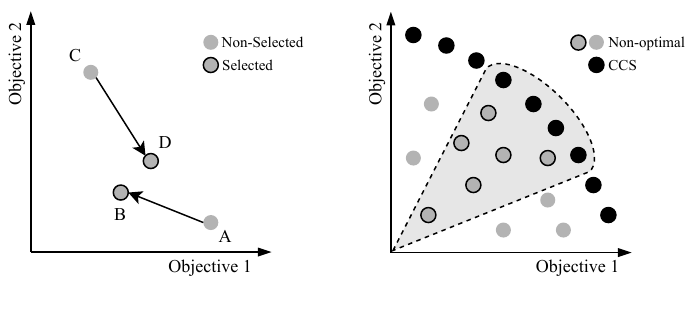}
    \caption{Examples of 2-objective MOMDP where GGF leads to fairer outcomes.}
    \label{fig:ex1}
\end{figure}
\textit{To illustrate how the GGF function ensures fairness in MORL, consider a two-objective MOMDP with objective values \(\vvf_1 = (3,1)\) and \(\vvf_2 = (2,3)\) and weights \((1,2)\). For \(\vvf_1\), two weighted combinations are possible: \textbf{A)} \((3,1) \cdot (2,1) = (6,1)\) with scalar sum \(6+1=7\), \textbf{B)} \((3,1) \cdot (1,2) = (3,2)\) with scalar sum \(3+2=5\). Since the GGF is defined as $\ggf(\vvf^\pi(s)) = \min_{\sigma \in \SD_{\nO}} \bigl\{ \w^\top_{\sigma} \vvf^\pi(s) \bigr\}$, it selects the lower scalar value, preferring point B over A (see left figure of ~\Cref{fig:ex1}). Similarly, for \(\vvf_2\): \textbf{C)} \((2,3) \cdot (1,2) = (2,6)\) with scalar sum \(2+6=8\), \textbf{D)} \((2,3) \cdot (2,1) = (4,3)\) with scalar sum \(4+3=7\).
Here, point D is preferred over C. This mechanism directs the solutions toward the fairer region (grey dotted area in the right figure of ~\Cref{fig:ex1}), demonstrating that maximizing the GGF leads to fair Pareto-optimal solutions.
}

\paragraph{\textbf{Fairness of Non-Stationary Policies.}} In fair MORL, learning non-stationary policies can be beneficial, as they use historical information to make more informed decisions and adapt over time.

\begin{proposition}\label{pro:nonstationary}
Let the reward $\R$ be nonnegative, and $\Pi_S$ and $\Pi_{NS}$ be the sets of stationary and non-stationary policies, respectively. For any $s \in \St$ in an MOMDP and a given $\ggf$, there exists a non-stationary policy $\pi_{NS} \in \Pi_{NS}$ that achieves a higher welfare score than any stationary policy $\pi_S \in \Pi_S$, i.e.,
$
    \exists \space \pi_{\text{NS}} \in \Pi_{\text{NS}} : \ggf(\vvf^{\pi_{\text{NS}}}(s)) \geq \max_{\pi_{\text{S}} \in \Pi_{\text{S}}} \ggf(\vvf^{\pi_{\text{S}}}(s)).
$
\end{proposition}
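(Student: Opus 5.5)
The inequality is weak, so the plan is to show that the set of non-stationary policies contains the stationary ones and that the maximum over the larger set is attained. I will not try to prove a strict improvement in general. The nonnegativity of $\R$ is used only at the end, in a remark on where strict gains come from.

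\textbf{Step 1 (inclusion).} Any stationary policy $\pi_S(a\mid s)$ is a non-stationary policy $\pi(a\mid\tau,s)$ that ignores the history $\tau$; this includes histories summarised by accrued rewards. So $\Pi_S\subseteq\Pi_{NS}$, and for any fixed start state $s$ the value vector $\vvf^{\pi_S}(s)$ is unchanged when $\pi_S$ is viewed as an element of $\Pi_{NS}$. It follows that
$\sup_{\pi\in\Pi_{NS}}\ggf(\vvf^{\pi}(s))\ge \sup_{\pi_S\in\Pi_S}\ggf(\vvf^{\pi_S}(s))$.

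\textbf{Step 2 (attainment).} To turn the supremum into an actual policy $\pi_{NS}$, I will argue compactness.
\begin{itemize}
\item Bounded rewards and $\gamma<1$ make the set of achievable value vectors $\{\vvf^{\pi}(s):\pi\in\Pi_{NS}\}$ bounded.
\item Standard MDP theory says this set equals the set of values of stationary randomized policies over occupancy measures. It is therefore a closed convex polytope: the convex hull of the finitely many deterministic stationary value vectors, obtained via mixtures or discounted occupancy measures.
\item $\ggf$ is a minimum of finitely many linear maps $\w_\sigma^\top\vvf$, as used in Proposition~\ref{prop:CCS}, so it is continuous.
\item A continuous function attains its maximum on a compact set, which gives $\pi_{NS}$ achieving the supremum over $\Pi_{NS}$.
\end{itemize}
If the paper's $\Pi_S$ means deterministic stationary policies, the maximum over $\Pi_S$ is over a finite set and is trivially attained. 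In that case Step 1 alone suffices: take $\pi_{NS}$ to be the maximizing stationary policy itself.

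\textbf{Main obstacle.} The delicate point is the attainment and measurability bookkeeping for general history-dependent policies. I would avoid it by citing the occupancy-measure characterization (Puterman) rather than constructing limits of policies.

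\textbf{Why strict gains can occur.} After the weak inequality, I would add an illustrative remark. With nonnegative rewards, the accrued vector reward $\sum_{k<t}\gamma^{k}\R_k$ is monotone along a trajectory. An accrued-reward-conditioned policy can therefore steer later rewards toward the currently lagging objective, which equalizes the sorted components $J^\uparrow$ on every realized trajectory, not just in expectation. Because $\ggf$ is Schur-concave, this can strictly increase welfare when it is measured per trajectory. A two-action example (one action rewards objective 1, the other objective 2) shows a strict gap over deterministic stationary policies, since alternating is impossible for those. This is supporting motivation, not needed for the stated inequality.
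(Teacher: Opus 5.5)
Your proof is correct, and it takes a different and cleaner route than the paper's.

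The paper splits the return at an intermediate time $T$ into an early part $\vvf_1$ and a late part $\vvf_2$. It treats a stationary policy as picking the late part to maximize $\ggf(\vvf_2)$ without regard to what was already accrued, and a non-stationary policy as maximizing $\ggf(\vvf_1+\gamma^{T-t}\vvf_2)$ jointly. It then concludes because a joint maximum dominates any particular choice. That picture motivates the accrued-reward augmentation in FN-MDQ, but as a proof it is loose. The argmax is taken over value vectors rather than policies. The ``myopic'' stationary value it writes down is not, in general, $\max_{\pi_S\in\Pi_S}\ggf(\vvf^{\pi_S}(s))$.

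Your Step 1 needs neither the decomposition nor nonnegativity, and it is already the whole proof. The statement writes $\max_{\pi_S\in\Pi_S}$, so a maximizer $\pi_S^*$ is presupposed, and $\pi_{NS}:=\pi_S^*$ works by the inclusion $\Pi_S\subseteq\Pi_{NS}$. This holds for any reading of $\Pi_S$, not only the deterministic one. Step 2, and the attainment and measurability obstacle you flag, are therefore unnecessary. Step 2 also silently assumes finite $\St$ and $\Ac$, which the paper never states.

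What Step 2 does buy is a sharper fact the paper misses. The paper defines stationary policies as those whose action probabilities depend only on the current state, so $\Pi_S$ contains randomized policies. The occupancy-measure result you cite then gives $\{\vvf^{\pi}(s):\pi\in\Pi_{NS}\}=\{\vvf^{\pi}(s):\pi\in\Pi_S\}$ in finite MDPs, so the best non-stationary and best stationary welfare values coincide. A strict gain from history dependence can therefore appear only against deterministic stationary policies or for per-trajectory welfare. Those are exactly the two settings your closing remark restricts itself to.
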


\paragraph{Example 4.2} \label{ex: non-stationary}
\textit{To illustrate the value of learning a non-stationary policy, consider a 2-objective MOMDP, shown in Fig.~\ref{fig:ex2}. At timestep $t > 0$, the agent has accrued a vector reward $\R_{\text{acc}} = (10, 0)$ for two objectives. The preference weights, encapsulated within the welfare function $\swf$, denote decreasing weights, such as $(0.8, 0.2)$. With two potential actions, each leading to a final state, action $a_1$ yields a reward of $(0, 10)$, while action $a_2$ yields $(5, 5)$. Since $s_t$ is the absorbing state, we can set the discount factor $\gamma=1$. Under the given welfare function $\swf$ defined in~\ref{eq:ggi}, executing $a_1$ yields a welfare score of $2$, whereas executing $a_2$ yields a score of $5$ if only future rewards are}
\begin{wrapfigure}{tr}{0.45\textwidth}
   \centering
   \begin{tikzpicture}[scale=1, auto,swap]
% \footnotesize
   % Define placeholder nodes
   \node (x2) {}; % Removed 'vertex' style
   \node[vertex] (x3) at (4,0) {};
   % Accrued past reward
   \node[left=0.5cm of x2] (accrued) {$(10,0)$};
   \draw[->] (accrued) -- (x2);
   % First we draw the vertices
   \foreach \pos/\name/\label in {{(0.7,0)/x2/s_{t-1}}, {(4,0)/x3/s_t}}
       \node[vertex] (\name) at \pos {$\label$};
   % Connect vertices with edges and draw weights
   \foreach \source/\dest/\weight in {x2/x3/{(0,10)}}
       \path[edge] (\source) edge [bend left] node[weight,above] {$\weight$} (\dest);
   \foreach \source/\dest/\weight in {x2/x3/{(5,5)}}
       \path[edge] (\source) edge [bend right] node[weight,below] {$\weight$} (\dest);
   % Additional edges for s_t
   \foreach \source/\dest/\weight in {x3/x3/{(0,0)}}
       \path[edge] (\source) edge [loop above] node[weight,right] {$\weight$} (\dest);
   \foreach \source/\dest/\weight in {x3/x3/{(0,0)}}
       \path[edge] (\source) edge [loop below] node[weight,right] {$\weight$} (\dest);
\end{tikzpicture}

\caption{Example of MOMDP where actions lead to different rewards.}
\label{fig:ex2}
\end{wrapfigure}
\textit{considered.  However, considering historical data, i.e., $\R_{\text{acc}}$, $a_1$ yields a higher accrued episodic return of $(10, 10)$ and a welfare score of $10$. Similarly, $a_2$ yields $(15, 5)$ and $7$ episodic return and welfare scores, respectively. 
Note that action $a_1$ is a fairer choice in this case since it balances the two objectives, unlike action $a_2$, which fails to achieve a more equitable outcome. Hence, employing historical data, namely, accrued rewards in this case, is critical to enable fair policy learning. }

\paragraph{\textbf{Optimality of Stochastic Policies for Fairness}}
Unlike single-objective RL, in MORL, a deterministic policy may not be optimal. A fairer solution can often be achieved through randomization.

\begin{proposition} \label{prop:stochastic}
Let $\Pi_{\text{ST}}$ be the set of stochastic policies and $\Pi_{\text{D}}$ be the set of deterministic policies.
For an MOMDP $\Md$ and a concave welfare function such as $\ggf$, there exists a stochastic policy $\pi_{\text{ST}} \in \Pi_{\text{ST}}$ such that
$
\ggf(\vvf^{\pi_{\text{ST}}}) \geq \max_{\pi_{\text{D}} \in \Pi_{\text{D}}} \ggf(\vvf^{\pi_{\text{D}}}).
$
\end{proposition}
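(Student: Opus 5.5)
The plan has two parts. The inequality itself follows from the inclusion $\Pi_{\text{D}} \subseteq \Pi_{\text{ST}}$. We then add a constructive argument, based on concavity of $\ggf$, showing why the inequality can be strict.

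\textbf{Step 1 (the claim as stated).} A deterministic policy is the special case of a stochastic policy that puts probability one on a single action in every state, so $\Pi_{\text{D}} \subseteq \Pi_{\text{ST}}$. Hence
\[
\sup_{\pi \in \Pi_{\text{ST}}} \ggf(\vvf^{\pi}) \ \geq\ \max_{\pi_{\text{D}} \in \Pi_{\text{D}}} \ggf(\vvf^{\pi_{\text{D}}}).
\]
The maximum on the right is attained because, for finite $\St$ and $\Ac$, $\Pi_{\text{D}}$ is finite. Taking $\pi_{\text{ST}}$ to be a maximizing deterministic policy, viewed as a stochastic one, gives the proposition directly. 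If $\St$ or $\Ac$ is infinite, the same inclusion argument still gives the inequality with $\sup$ in place of $\max$, so no extra work is needed there.

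\textbf{Step 2 (why randomization can help).} Take two deterministic policies $\pi_1,\pi_2$ and, for $\alpha \in [0,1]$, form the policy $\pi_\alpha$ that flips an $\alpha$-coin at the start of the episode and then follows $\pi_1$ or $\pi_2$ throughout. By linearity of expectation, $\vvf^{\pi_\alpha} = \alpha \vvf^{\pi_1} + (1-\alpha)\vvf^{\pi_2}$, so the achievable value set contains the convex hull of the deterministic value vectors.

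Using the representation $\ggf(\bm u) = \min_{\sigma}\w_\sigma^\top \bm u$ from Proposition~\ref{prop:CCS}, $\ggf$ is a minimum of linear functions and therefore concave. Concavity gives
\[
\ggf(\vvf^{\pi_\alpha}) \geq \alpha\,\ggf(\vvf^{\pi_1}) + (1-\alpha)\,\ggf(\vvf^{\pi_2}).
\]
The inequality is strict when the two vectors are ordered differently, i.e.\ they fall in different linear pieces of $\ggf$.

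A concrete witness is a one-step MOMDP with two actions giving rewards $(1,0)$ and $(0,1)$, and GGF weights $(0.8,0.2)$. Each deterministic policy scores $0.2$. The uniform mixture has value $(0.5,0.5)$ and scores $0.5$, so the inequality can be strict.

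\textbf{Main obstacle.} The delicate point is that the mixture in Step 2 is a history-dependent randomization (one coin flip at time $0$), not a stationary Markov stochastic policy. If $\Pi_{\text{ST}}$ is meant to contain only stationary stochastic policies, I would invoke the standard occupancy-measure argument: the set of discounted occupancy measures of stationary stochastic policies from a fixed initial state is convex, and every point in it is realized by some stationary policy. So for any convex combination of deterministic value vectors there is a stationary stochastic policy with that exact value. Step 1 does not depend on this point, so the stated inequality is unaffected.
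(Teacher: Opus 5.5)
Your proposal is correct, but it proves the inequality by a different and more direct route than the paper.

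The paper works with a mixture of deterministic policies. It applies Jensen's inequality, $\ggf(\Expect[\vvf^{\pi_{\text{ST}}}]) \ge \Expect[\ggf(\vvf^{\pi_{\text{ST}}})]$, and then asserts that some mixture satisfies $\Expect[\ggf(\vvf^{\pi_{\text{ST}}})] \ge \max_{\pi_{\text{D}}} \ggf(\vvf^{\pi_{\text{D}}})$. The expected welfare of a mixture is a convex combination of deterministic welfare values. So that middle inequality can hold only when the mixture puts all its weight on maximizers. Once justified, it is exactly your Step 1, the inclusion $\Pi_{\text{D}} \subseteq \Pi_{\text{ST}}$, and the Jensen step adds nothing to the weak inequality.

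The paper's framing points at the right mechanism (concavity rewards mixing), but it never shows any actual gain. Your Step 2 does. It isolates the concavity gap $\ggf(\alpha\vvf^{\pi_1} + (1-\alpha)\vvf^{\pi_2}) - \alpha\ggf(\vvf^{\pi_1}) - (1-\alpha)\ggf(\vvf^{\pi_2})$. For $\alpha \in (0,1)$ and strictly decreasing weights, this gap is positive exactly when the two value vectors have no common sorting order, which sharpens your ``ordered differently.'' You also exhibit a strict example, with welfare $0.5$ versus $0.2$. Your occupancy-measure remark supplies the fact that such mixtures are realizable by stationary stochastic policies, which the paper takes for granted by citation.

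One small correction. $\Pi_{\text{D}}$ is finite only if it means stationary deterministic policies, but you do not need finiteness. The $\max$ in the statement already presupposes a maximizer, which you simply reuse. In the infinite case the inclusion only gives $\sup_{\Pi_{\text{ST}}} \ge \sup_{\Pi_{\text{D}}}$, and that yields a single witness $\pi_{\text{ST}}$ only under an attainment assumption.
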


\begin{figure}[h]
    \centering
    \includegraphics[width=0.75\linewidth]{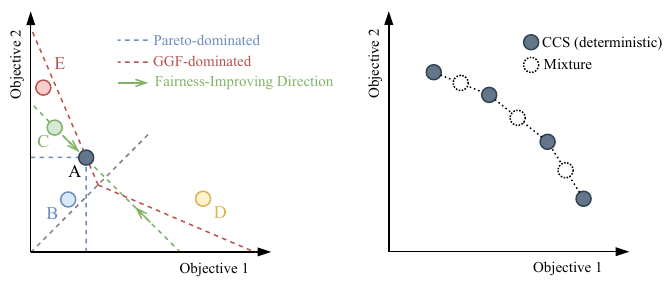}
    \caption{Left Figure: Point A Pareto-dominates B and is preferred to C by the Pigou–Dalton transfer(fairer solution). Depending on GGF weights, D and E may be dominated or non-dominated by A (w.r.t. GGF); for weights~$(0.3, 0.7)$, A is preferred to E but not D. Right: Black points denote deterministic policies in the CCS; mixing these yields stochastic policies (dotted points), which can achieve fairer solutions unattainable by any single deterministic policy.}
    \label{fig:ex3}
\end{figure}

Proofs of the above lemma and propositions are provided in~\Cref{app:proofs}. The left figure of \Cref{fig:ex3} illustrates GGF in a two-objective task. The optimality of stochastic policies implies that restricting the search to deterministic policies is insufficient, and stochastic policies expand the solution space and can better capture trade-offs, thus improving overall fairness, as shown in~\Cref{fig:ex3}.

\section{Proposed Algorithms} 
\label{sec:algos}
In this section, we introduce three novel algorithms that incorporate fairness into MORL based on the technical analysis in the previous section. 
These algorithms optimize the GGF function~\eqref{eq:ggi} to ensure fairness across $\nO$ fixed users with varying preferences.
These methods are scalable and sample-efficient as they utilize a single parameterized network to estimate Q-values for all objectives while maintaining a diverse set of Pareto-optimal policies.
Specifically, we introduce Fair Multi-Objective Deep Q-Learning (F-MDQ), its non-stationary extension (FN-MDQ), and a novel extension incorporating stochastic policies (FNS-MDQ). This progression from stationary to non-stationary to stochastic and non-stationary policies demonstrates our systematic approach to enhancing fairness in MORL algorithms, with each method building upon the previous one.

\paragraph{F-MDQ.} F-MDQ builds on the Envelope algorithm~\citep{yang2019generalized} by replacing the linear scalarization function with the GGF welfare function $\swf$. This ensures fairness while learning policies across all preferences $\w \in \Omega$. The Bellman optimality equation for F-MDQ is given by:
\begin{align*}
    \vQ^*(s, a, \w) = \Expect [\R(s,a) + \gamma \vQ^*(s', \sup_{a' \in \Ac} 
     \ggf(\R(s,a) + Q^*(s',a',\w), \w) \mid s,a ],
\end{align*}
where $\vQ^\pi(s, a, \w)$ represents the expected return vector for policy $\pi$, conditioned on preference $\w$. As the MO Q-function is parameterized, it can be learned by minimizing the loss function
$
    \mathcal{L} = \Expect_{(s,a,\R,s',\omega) \sim \mathcal{D}} \left[ \| \bm y - \bm Q(s,a,\w) \|_2^2 \right],
$
where the expectation is taken over experiences sampled from the replay buffer $\mathcal{D}$. Given that the loss function includes an expectation over $\w$, the preference weights are sampled randomly and are decoupled from the transitions, allowing increased sample efficiency through a resampling scheme similar to Hindsight Experience Replay (HER)~\citep{andrychowicz2017hindsight}. The target $\bm y$ is F-MDQ is computed as 
\begin{align*}
    \bm y = \R(s,a) + \gamma \vQ'(s', \sup_{a' \in \Ac} \ggf(\R(s,a) + \gamma Q(s',a',\w)), \w),
\end{align*}
where $Q'$ represents the target multi-objective Q-function, and the supremum is applied over the GGF welfare function $\ggf$ instead of a linear weighted sum. This ensures that actions are selected based on higher welfare scores rather than simply maximizing Q-values.

\paragraph{FN-MDQ.}
FN-MDQ extends F-MDQ by incorporating accrued rewards into the state to learn non-stationary policies, as discussed in Proposition~\ref{pro:nonstationary}. It augments the state with accrued rewards, allowing the agent to balance reward distribution across users (as demonstrated in Example 2). The augmented state is defined as $\mathfrak{s}_t = (s_t, \R_{\text{acc}})$, where $\R_{\text{acc}} =  \sum_{i=1}^{t-1}\gamma^{i-1} \R_i$ is the discounted reward received in the current trajectory. The regression target for FN-MDQ is given by
\begin{align*}
\R(s_t, a_t) + \gamma \vQ'(\mathfrak{s}_{t+1}, \sup_{a' \in \Ac} \ggf(Q(\mathfrak{s}_{t+1}, a', \w)), \w).
\end{align*}
Here, the immediate reward $\R(s_t, a_t)$ is excluded from the optimal action since this is already included in the augmented state as part of the discounted total reward. This extension enables the agent to identify and prioritize users who have received insufficient rewards within a trajectory.

\paragraph{FNS-MDQ.}
Given that stochastic policies can outperform deterministic ones (as established in~\Cref{prop:stochastic}), the performance of FN-MDQ can be enhanced by incorporating stochastic policies. We now explain how stochastic policies can be integrated into the FN-MDQ algorithm.

Under the stochastic policies, the target Q-value is adjusted to account for the expected Q-values, which reformulates the update as
\begin{align*}
\R(s_t, a_t) + \gamma \vQ'(\mathfrak{s}_{t+1}, \sum_{a' \in \Ac} \ggf(\pi(a' \mid \mathfrak{s}_{t+1}) Q(\mathfrak{s}_{t+1}, a', \w)), \w),
\end{align*}
where $\pi(a' \mid \mathfrak{s}_{t+1})$ is the probability of taking action $a'$ given the augmented state $\mathfrak{s}_{t+1}$. This reformulation considers the distribution of possible actions rather than selecting a single best deterministic action, aligning with our theoretical insights.

Unlike F-MDQ and FN-MDQ, which rely on deterministic action selection, FNS-MDQ samples actions from a probability distribution over Q-values. This stochastic action selection improves fairness by enabling more balanced policy exploration and reducing biases that arise from always selecting the highest Q-value action. Note that, during the training phase, all algorithms employ an $\epsilon$-greedy policy during training, however, FNS-MDQ differs in its action-selection strategy by using the best learned stochastic policy rather than a deterministic greedy approach. This increased flexibility and randomness can lead to more equitable solutions.

\section{Experiments} \label{sec:experimets}
To evaluate the proposed methods, we conduct experiments across three domains---each characterized by varying levels of complexity in terms of the number of objectives. These domains, ranging from low to high in terms of the number of objectives, include species conservation, resource gathering, and multi-product web advertising. Each environment presents unique challenges where fairness plays a critical role. We first briefly describe each environment (details are available in Appendix B) and then present our experimental results.

\subsection{Environments}
Our first domain is a species conservation (SC) environment, which addresses a critical ecological challenge: balancing the populations of two highly interacting endangered species, the sea otter and the northern abalone. Both species are at risk of extinction, requiring sophisticated management strategies to ensure their survival. We adopt the model proposed by~\citep{chades2012setting}, which simulates the predation relationship between the species, where sea otters prey on abalones. This dynamic presents a unique preservation challenge, as the survival of one species could potentially drive the other to extinction if not properly managed. The state space is composed of the current population sizes of sea otters and northern abalones. The action space includes introducing sea otters, enforcing anti-poaching measures, controlling sea otter populations, implementing a combination of half-antipoaching and half-controlled sea otters, or taking no action. Each action has significant ecological implications. For instance, introducing sea otters may help balance the abalone population, but if mismanaged, could lead to abalone extinction. The reward function is defined by the population densities of both species, i.e., $\nO=2$. Fairness in this context is interpreted as achieving a balanced distribution of species densities to ensure their preservation.

Our second environment is a resource-gathering (RG) problem, which is a $5 \times 5$ grid world that contains three types of resources: gold, gems, and stones. These resources are randomly positioned on the grid and regenerate randomly upon consumption. The main challenge here is to collect these resources, where each resource has a different value: gold and gems are valued at 1, while stones have a lower value of 0.4. This creates an intentionally uneven resource distribution, with two stones, one gold, and one gem. In this environment, the state is defined by the agent's current location on the grid and the cumulative count of each resource collected during its trajectory. The agent can take four actions: up, down, left, and right. The reward is a vector representing the resources collected for each type. In this environment, fairness is defined as the equitable collection of resources, despite their differing values. Note that this problem is particularly important for validating whether the proposed methods can achieve fairer solutions while still reaching Pareto optimal solutions.

Our third domain is a multi-product web advertising (MWP) problem that involves an online store offering $\nO=7$ distinct products. Here, the agent decides which advertisement to display: a product-specific advertisement for one of the products $i \in [0, ..., \nO-1]$, or a general advertisement that is not tailored to any specific product. In this environment, the state space includes the number of products available in the store, as well as the number of visits, purchases, and exits. The action space is $\nO+1$, where actions $0$ through $\nO-1$ correspond to displaying advertisements for specific products, and action $\nO$ involves showing a general advertisement. This additional action adds complexity, requiring the agent to decide the optimal moment to transition between states. The reward function is designed so that the agent receives a reward of 1 in the $i^{th}$ dimension of the reward vector if a product of the type $i$ is sold after displaying its advertisement. In this environment, fairness is defined as balancing the frequency of advertisements shown for each product, ensuring no single product is overly prioritized. The challenge lies in increasing overall rewards while maintaining a fair distribution of advertisement exposure across all products.

\begin{figure*}[t]
    \centering
    \begin{subfigure}[t]{0.49\linewidth}
        \centering
	     \includegraphics[width=\linewidth]{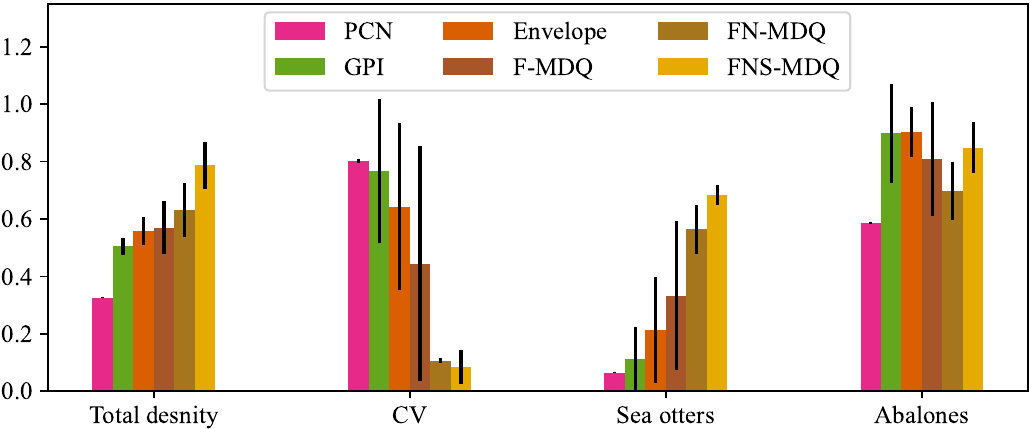}
	      \caption{Total density, CV, min density, and max density.}
	      \label{fig:spec_cv}
	\end{subfigure}
	\begin{subfigure}[t]{0.49\linewidth}
	    \centering
         \includegraphics[width=\linewidth]{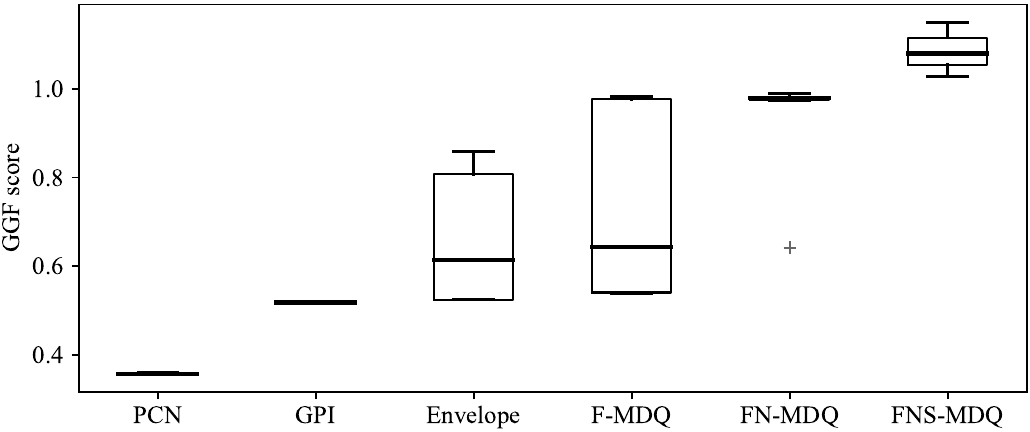}
        \caption{GGF scores.}
        \label{fig:spec_hv}
    \end{subfigure}
    \caption{Performances of multi-policy MORL baselines and our methods in species conservation.}
    \label{fig:spec}
\end{figure*}

\begin{figure*}[t]
    \centering
    \begin{subfigure}[t]{0.245\linewidth}
        \centering
	     \includegraphics[width=\linewidth]{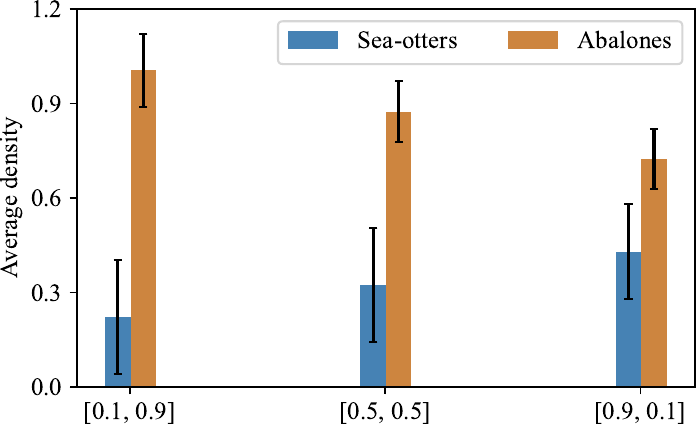}
	      \caption{Envelope.}
	      \label{fig:sc_bar_en}
	\end{subfigure}
	\begin{subfigure}[t]{0.245\linewidth}
	    \centering
         \includegraphics[width=\linewidth]{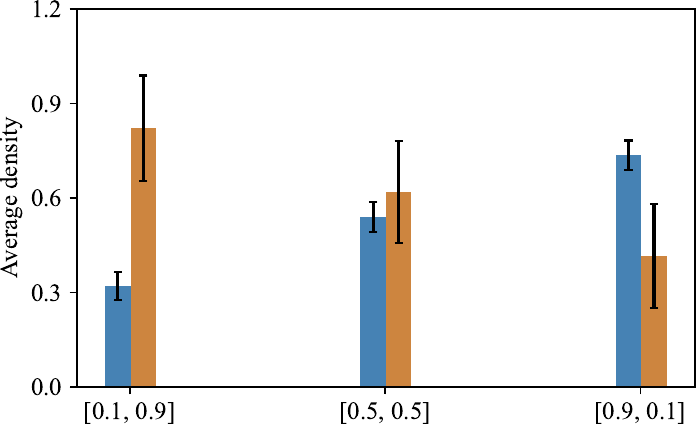}
        \caption{F-MDQ.}
        \label{fig:sc_bar_f}
    \end{subfigure}
    \begin{subfigure}[t]{0.245\linewidth}
	    \centering
         \includegraphics[width=\linewidth]{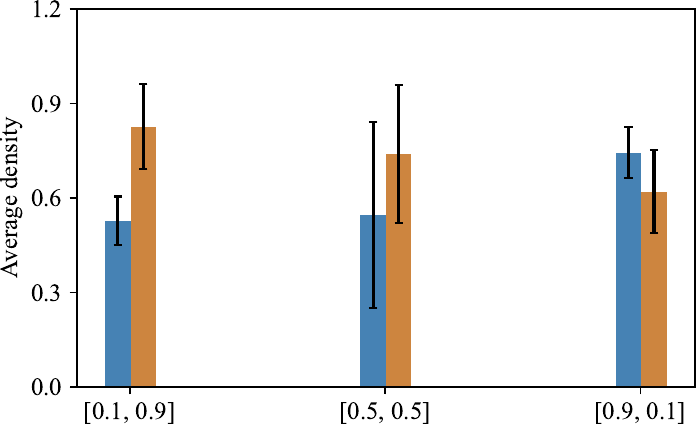}
        \caption{FN-MDQ.}
        \label{fig:sc_bar_fn}
    \end{subfigure}
    \begin{subfigure}[t]{0.245\linewidth}
	    \centering
         \includegraphics[width=\linewidth]{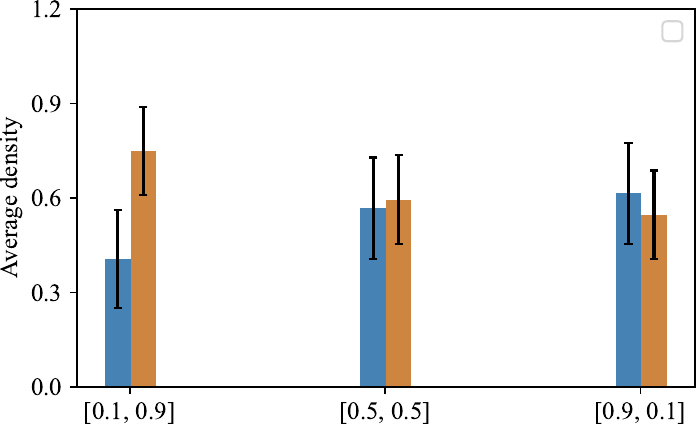}
        \caption{FNS-MDQ.}
        \label{fig:sc_bar_fns}
    \end{subfigure}
    \caption{Individual densities of Envelope, and our proposed methods during testing with unseen preferences in species conservation.}
    \label{fig:sc_ind_bar}
\end{figure*}

\subsection{Baselines}
We compare our proposed methods against several multi-policy MORL baselines. Generalized Policy Improvement Linear Support (GPI-LS) \citep{alegre2023sample} employs GPI~\citep{barreto2017successor} to combine policies within its learned CCS and prioritize the weight vectors on which agents should train at each moment. The Envelope~\citep{yang2019generalized} uses a single neural network conditioned on a weight vector to approximate the CCS. PCN~\citep{reymond2022pareto} utilizes a neural network conditioned on a desired return per objective and is trained via supervised learning to predict actions that yield the desired return.
Hyperparameters for each method were optimized, and experiments were run for five different seeds, with average results reported. Further details on experimental configurations and hyperparameters are provided in Appendix C.

\subsection{Results}
In this section, we present the experimental results across the three environments presented above. The primary objective of these experiments is to assess the effectiveness of our proposed methods by addressing the following key research questions: \textbf{(A)} How effective are our methods in learning fairer solutions compared to multi-policy MORL baselines? \textbf{(B)} Can our methods generate fair solutions across different preference settings during inference? 
\textbf{(C)} To what extent can our proposed algorithms achieve comparable performance in terms of hypervolume and cardinality relative to multi-policy MORL approaches? 
\textbf{(D)} What is the impact of our approach on the diversity and quality of non-dominated solutions that satisfy fairness criteria? \textbf{(E)} Does the incorporation of stochastic policies in MO Q-learning based algorithms contribute to improved fairness or overall performance?

\paragraph{\textbf{Question (A)}} To evaluate how effective our methods are in learning fair solutions, we conducted experiments in the SC, RG, and MWP domains, as shown in~\Cref{fig:spec_cv,fig:rc_cv,fig:mv_cv}. We compare our proposed methods (F-MDQ, FN-MDQ, and FMS-MDQ) with multi-policy MORL baselines such as PCN, GPI, and Envelope during the training phase. 
We choose these baselines as they are the current state-of-the-art MORL baselines. The Key evaluation metrics used include total rewards, Coefficient of Variation (CV) indicating the variations in different objectives’ utilities, and the minimum and maximum objective utilities. Moreover, GGF welfare scores were computed to quantify fairness. As we are in a multi-policy MORL, an agent learns a set of Pareto optimal policies during learning. To show the results, we computed these metrics over the last 50 trajectories for all the Pareto optimal policies and reported their normalized scores. Note that, during the last 50 trajectories, all the agents are converged so it ensures a fair comparison for multi-policy MORL methods. 

As shown in~\Cref{fig:spec_cv}, PCN performs the worst. GPI outperforms PCN, likely due to its TD3-based~\citep{fujimototd32018} architecture and efficient prioritization scheme in learning the Pareto front $\mathcal{F}$. The Envelope algorithm performs better than PCN and GPI as it achieves higher total density and, interestingly, lower CV. However, our proposed algorithms outperform all other methods by achieving the lowest CV and highest welfare scores~\Cref{fig:spec_hv}, with FN-MDQ outperforming F-MDQ, underscoring the value of non-stationary policies. Furthermore, FNS-MDQ outperforms both F-MDQ and FN-MDQ as it maximizes the minimum objective utility and demonstrates better fairness through optimizing the welfare function $\ggf$. Similar results are observed in RG~\Cref{fig:rc_cv}, where PCN performs the worst as it collects the least resources, likely due to its limitations in deterministic environments~\citep{reymond2022pareto}. Although GPI performs better than PCN, both exhibit low CV alongside poor overall performance and GGF welfare utility~\Cref{fig:rc_hv}. The Envelope algorithm achieves better performance in terms of rewards but suffers from the highest CV and lower GGF utility scores. In contrast, our proposed methods attain a lower CV compared to all baselines, and they achieve the highest GGF scores, highlighting their effectiveness in identifying fair policies through welfare function optimization. Interestingly, FNS-MDQ exhibits a higher CV due to its higher maximum objective and the total resources collected. Nevertheless, it also achieves the highest welfare scores. Consistent with our previous results, our proposed methods in MVP environment~\Cref{fig:mv_cv} achieve the highest welfare scores, indicating their capacity to ensure an equitable distribution of rewards across all objectives. Moreover, they maintain the lowest CV, highlighting their robustness in learning fair policies, even in highly stochastic environments with a higher number of objectives. Once again, PCN, and GPI perform the worst, further underscoring the efficacy of our methods in this context.

\paragraph{\textbf{Question (B)}} % Can our methods generate fair solutions across different preference settings during inference? 
To check whether our methods can generate fair solutions across different preference settings, we evaluated our algorithms with unseen preferences during testing in the SC environment. As shown in~\Cref{fig:sc_ind_bar}, which presents the individual species densities (sea otters and abalones) for preference configurations $(0.1,0.9), (0.5, 0.5), (0.9, 0.1)$, the Envelope algorithm fails to produce fair solutions, suggesting its limitation in generating fair optimal policies across varying preferences.  In contrast, F-MDQ generates more balanced solutions, while FN-MDQ and FNS-MDQ achieve even fairer outcomes, further validating our earlier findings.

\begin{figure*}[t]
    \centering
    \begin{subfigure}[t]{0.49\linewidth}
        \centering
	     \includegraphics[width=\linewidth]{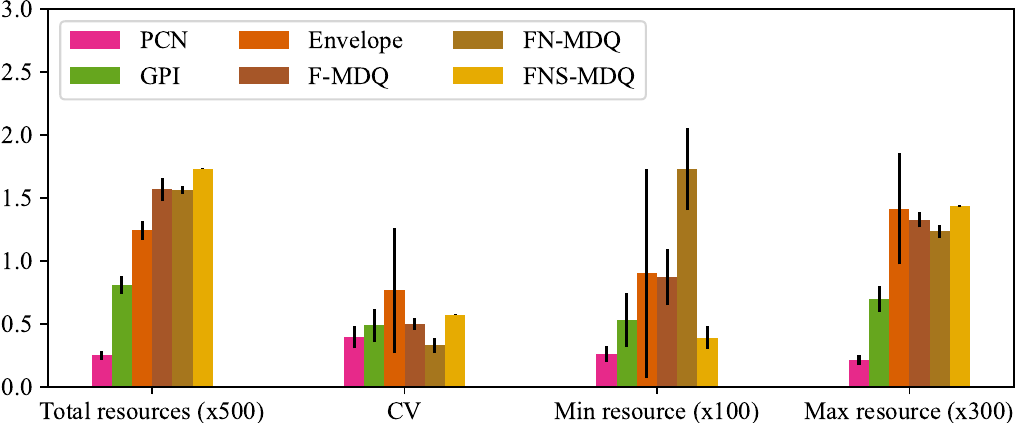}
	      \caption{Total resources, CV, min and max resources.}
	      \label{fig:rc_cv}
	\end{subfigure}
	\begin{subfigure}[t]{0.49\linewidth}
	    \centering
         \includegraphics[width=\linewidth]{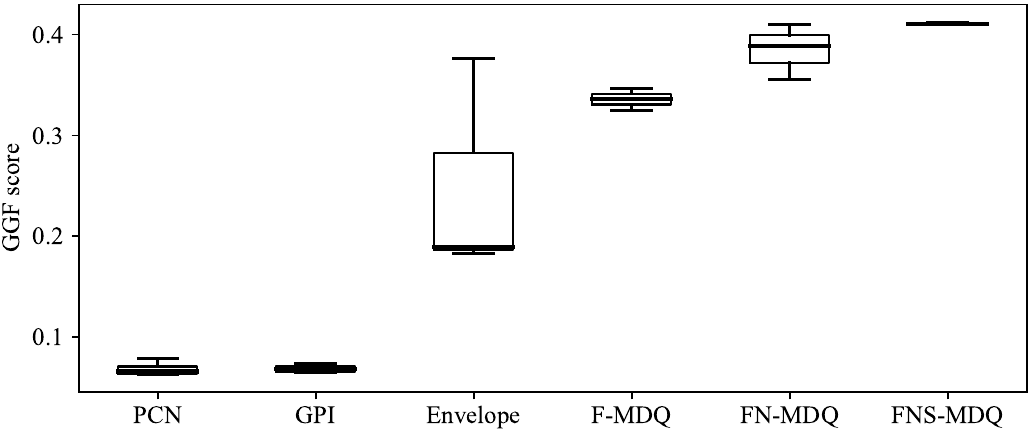}
        \caption{GGF scores.}
        \label{fig:rc_hv}
    \end{subfigure}
    \caption{Performances of multi-policy MORL baselines and our methods in resource gathering.}
    \label{fig:rc}
\end{figure*}
\begin{figure*}[t]
    \centering
    \begin{subfigure}[t]{0.49\linewidth}
        \centering
	     \includegraphics[width=\linewidth]{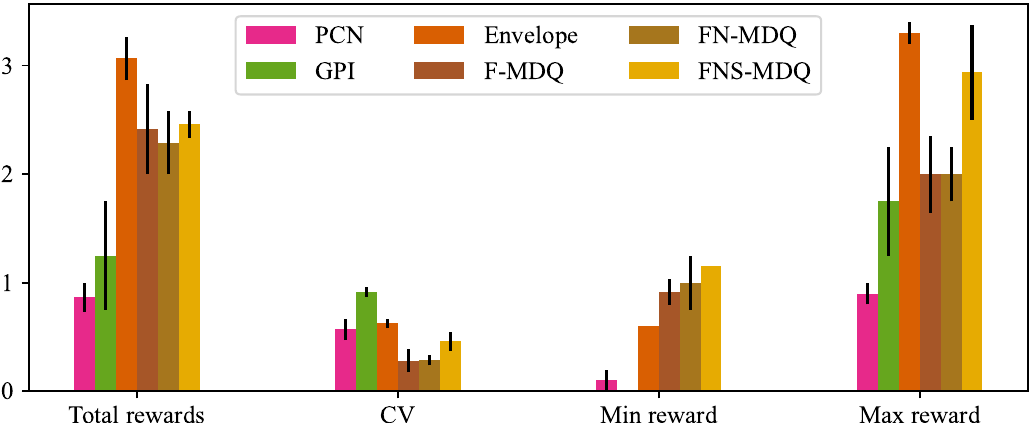}
	      \caption{Total rewards, CV, min reward, and max reward.}
	      \label{fig:mv_cv}
	\end{subfigure}
	\begin{subfigure}[t]{0.49\linewidth}
	    \centering
         \includegraphics[width=\linewidth]{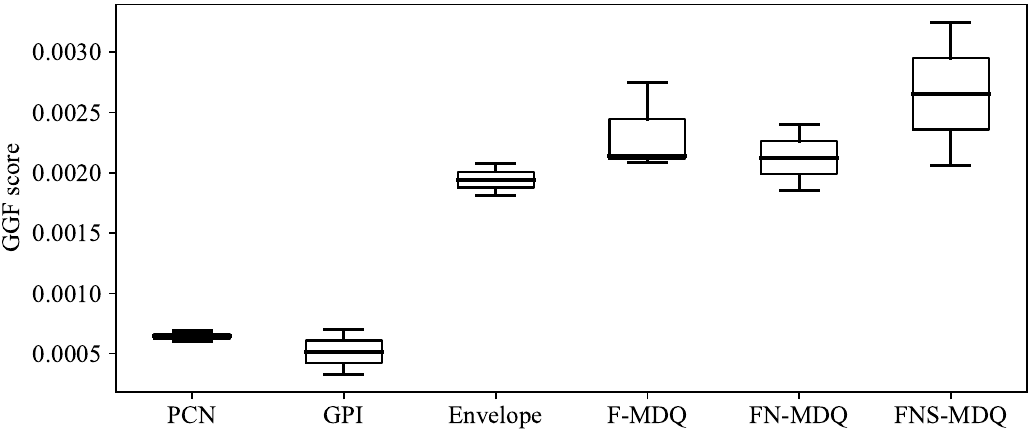}
        \caption{GGF scores.}
        \label{fig:mv_hv}
    \end{subfigure}
    \caption{Performances of multi-policy MORL baselines and our proposed methods in the MPW.}
    \label{fig:mv}
\end{figure*}

\paragraph{\textbf{Question (C)}} %To what extent can our proposed algorithms achieve comparable performance in terms of hypervolume and cardinality relative to multi-policy MORL approaches?
To answer this question, we evaluate algorithms in terms of MORL metrics, such as cardinality and hypervolume (HV). A higher cardinality indicates greater policy diversity within $\mathcal{F}$, while HV measures both the convergence rate and policy diversity~\citep{LaumannsThieleDebZitzler02}. Recall that, HV is defined as for any given $\mathcal{F}^\prime$ an approximation of $\mathcal{F}$ and a reference point (the worst-possible return), it measures the volume of the hypercube spanned by the reference point and estimated return in a trajectory. ~\Cref{tab:hv_cd_all} presents the HV and cardinality in all environments. These results show that our proposed methods perform on par with the considered baselines.

\paragraph{\textbf{Question (D)}} %What is the impact of our approach on the diversity and quality of non-dominated solutions that satisfy fairness criteria?
The results discussed in previous questions suggest that our methods can generate a range of Pareto optimal solutions across varied preference configurations, which indicates better coverage of the objective space, thus leading to improved performance across multiple objectives. For quality, our proposed algorithms consistently achieve the lowest  CV and highest GGF welfare scores across SC, RG, and MVP domains, indicating that our solutions exhibit more equitable distribution of objective utilities while maintaining Pareto optimality compared to baseline methods (PCN, GPI, and Envelope). These outcomes align with our theoretical justifications (see~\Cref{sec:theo}).

\begin{table*}[t]
\centering
\caption{Hypervolume (HV) and Cardinality (CD) of various MORL methods on SC, RC, and MWP.}  
\vspace{8pt}
\label{tab:hv_cd_all}
\small
\begin{tabular}{lcc|cc|cc}  
\toprule
Methods  & \multicolumn{2}{c|}{SC}  & \multicolumn{2}{c|}{RC}  & \multicolumn{2}{c}{MWP}  \\  
\midrule
         & HV $(10^4)^\uparrow$ & $\text{CD}^\uparrow$ & HV ($10^5$) & CD & HV ($10^9$) & CD \\  
\midrule
PCN      & $1.81 \pm 0.14$ & $19.67 \pm 2.99$ & $11.69 \pm 0.90$ & $6.0 \pm 1.27$ & $10.17 \pm 0.22$ & $43.5 \pm 1.06$ \\
GPI      & $2.82 \pm 0.03$ & $12.0 \pm 2.05$ & $7.33 \pm 0.19$ & $43.0 \pm 2.62$ & $10.44 \pm 0.86$ & $41.0 \pm 2.83$ \\
Envelope & $2.35 \pm 0.18$ & $5.6 \pm 1.04$ & $17.51 \pm 3.73$ & $19.75 \pm 6.79$ & $10.55 \pm 1.96$ & $51.5 \pm 1.06$ \\
F-MDQ    & $2.22 \pm 0.19$ & $6.6 \pm 1.31$ & $16.92 \pm 1.63$ & $31.33 \pm 7.84$ & $10.45 \pm 2.40$ & $48.0 \pm 2.12$ \\
FN-MDQ   & $2.34 \pm 0.07$ & $11.68 \pm 1.05$ & $20.38 \pm 1.49$ & $33.54 \pm 8.29$ & $10.51 \pm 2.42$ & $52.2 \pm 2.44$ \\
FNS-MDQ  & $2.91 \pm 0.20$ & $15.38 \pm 1.10$ & $24.40 \pm 2.22$ & $36.11 \pm 8.96$ & $10.62 \pm 2.45$ & $51.05 \pm 2.30$ \\
\bottomrule
\end{tabular}
\end{table*}

\paragraph{\textbf{Question (E)}} %Does the incorporation of stochastic policies in MO Q-learning algorithms contribute to improved fairness or overall performance?
Finally, to assess the impact of incorporating stochastic policies in MO Q-learning algorithms, we refer to the results in~\Cref{fig:spec_cv,fig:rc_cv,fig:mv_cv}, where stochastic policies consistently improve both efficiency and fairness. Moreover, as shown in~\Cref{tab:hv_cd_all} incorporating stochastic policies also enhances MORL metrics, including HV and cardinality, 
validating the contribution of stochasticity to both fairness and overall performance.

\section{Conclusions and Limitations}
In this paper, we presented a novel approach to addressing fairness in the context of multi-policy MORL. Our proposed methods leverage a single parameterized network to learn optimized policies across the entire space of possible preferences. Both theoretical and empirical analyses demonstrate that learning a non-stationary policy significantly improves fairness. Additionally, we highlighted the importance of stochastic policies in achieving fair outcomes. Experimental evaluations in three domains validated the effectiveness of our approach in yielding more equitable policies compared to state-of-the-art MORL and fair baselines.

Our approach also has some limitations. First, it is limited to MOMDPs with discrete action spaces. Second, it assumes that preference weights are linear to learn the CCS, which may not capture the concave regions of the Pareto front. Third, the current formulation is focused on individual fairness. Given that optimizing a welfare function is a broad framework applicable to various real-world MORL problems involving general utilities, an important direction for future research is to extend this approach to accommodate more sophisticated objective functions, particularly those related to group-level fairness, safety, and risk sensitivity.

%%%%%%%%%%%%%%%%%%%%%%%%%%%%%%%%%%%%%%%%%%%%%%%%%%%%%%%%%%%%%%%%
%% Section: Submission of papers to RLJ/RLC
%%%%%%%%%%%%%%%%%%%%%%%%%%%%%%%%%%%%%%%%%%%%%%%%%%%%%%%%%%%%%%%%

\subsubsection*{Acknowledgments}
\label{sec:ack}
This work was supported by the Office of Naval Research under Grant N000142412405 and the Army Research Office under Grants W911NF2110103 and W911NF2310363.

%%%%%%%%%%%%%%%%%%%%%%%%%%%%%%%%%%%%%%%%%%%%%%%%%%%%%%%%%%%%%%%%
%% NOTE: THIS MARKS THE END OF THE "MAIN TEXT"
%%%%%%%%%%%%%%%%%%%%%%%%%%%%%%%%%%%%%%%%%%%%%%%%%%%%%%%%%%%%%%%%

%%%%%%%%%%%%%%%%%%%%%%%%%%%%%%%%%%%%%%%%%%%%%%%%%%%%%%%%%%%%%%%%
%% Bibliography
%%%%%%%%%%%%%%%%%%%%%%%%%%%%%%%%%%%%%%%%%%%%%%%%%%%%%%%%%%%%%%%%
\bibliography{main}
\bibliographystyle{rlj}

%%%%%%%%%%%%%%%%%%%%%%%%%%%%%%%%%%%%%%%%%%%%%%%%%%%%%%%%%%%%%%%%
% AUTHOR: If your paper has no supplementary materials, you may 
%         comment out the line below, which creates the title for
%         the supplementary materials.
%%%%%%%%%%%%%%%%%%%%%%%%%%%%%%%%%%%%%%%%%%%%%%%%%%%%%%%%%%%%%%%%
\beginSupplementaryMaterials

\section{Proofs of Technical Analysis}
\label{app:proofs}
In this section, we provide formal proofs of our technical analysis in detail. For better legibility, we first recall the equations and results that we need for our proofs.
\begin{align}
    \forall \w \in \Omega, \quad \max_{\pi \in \Pi} \,\, \swf(\vvf(\pi)), 
\end{align}
where $\Omega$ is the set of valid preference weights sorted in descending order, $\vvf(\pi) = \Expect_{\pi} [\sum_{t=0}^\infty \gamma^t \R_t]$ is the expected discounted return, and $\swf(\bm{J}) = \sum_{i=1}^\nO w_i \vvf_{(i)}$ with $\vvf_{(1)} \leq \cdots \leq \vvf_{(n)}$. 

\begin{lemma} \label{le: op in CCS}
For any MOMDP with linear preferences over objectives,  the CCS contains an optimal policy for any linear combination of the objectives.
\end{lemma}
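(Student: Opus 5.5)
We prove the lemma directly from the definitions of the Pareto front $\mathcal{F}$ and the CCS. Fix a state $s$ and a weight vector $\w \in \Omega$ with nonnegative components, and consider the linearly scalarized problem $\max_{\pi \in \Pi} \w^\top \vvf^\pi(s)$. We must show that some maximizer $\pi^*$ of this problem has $\vvf^{\pi^*}(s) \in \text{CCS}$.

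\emph{Step 1 (existence of a maximizer).} The scalarized reward $\w^\top \R(s,a)$ defines an ordinary single-objective MDP. By linearity of expectation, $\w^\top \vvf^\pi(s)$ is exactly the value of $\pi$ in that MDP. Standard MDP theory (for finite or bounded-reward discounted MDPs, \citep{Puterman94}) then gives an optimal stationary deterministic policy $\pi^*$. This policy attains $\max_{\pi\in\Pi}\w^\top\vvf^\pi(s)$ simultaneously at every state, even when $\Pi$ includes non-stationary and stochastic policies. This step is where the assumption of linear preferences is used: the scalarization commutes with the expectation.

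\emph{Step 2 (the maximizer can be taken Pareto-optimal).} Among the maximizers, choose one whose value is not Pareto-dominated.

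\begin{itemize}
\item If all components of $\w$ are strictly positive, this is automatic. Any $\vvf'$ Pareto-dominating $\vvf^{\pi^*}(s)$ would satisfy $\w^\top\vvf' > \w^\top\vvf^{\pi^*}(s)$, which contradicts optimality.
\item If some weights are zero, we break ties lexicographically. Among the $\w$-optimal policies, we maximize the remaining objectives with a small positive perturbation $\w+\epsilon\bm 1$. For finite MDPs the set of achievable values at stationary deterministic policies is finite, so for small enough $\epsilon$ a $(\w+\epsilon\bm 1)$-optimal policy is also $\w$-optimal. By the previous case, this policy is Pareto-optimal.
\end{itemize}

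Either way we obtain $\vvf^{\pi^*}(s)\in\mathcal{F}$.

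\emph{Step 3 (membership in the CCS).} Since $\pi^*$ maximizes $\w^\top\vvf^\pi(s)$ over all of $\Pi$, we have in particular $\w^\top\vvf^{\pi^*}(s)\ge \w^\top\vvf'$ for every $\vvf'\in\mathcal{F}$. This is exactly the defining condition of the CCS with witness weight $\w$, so $\pi^*\in\text{CCS}$ and it is optimal for the linear combination $\w$. Because $\w$ was arbitrary, the claim follows.

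The main obstacle is Step 2. The CCS as defined here only ranges over members of $\mathcal{F}$, so we need an optimal policy that is actually Pareto-optimal. For strictly positive weights this is immediate. For weights with zero components it needs the tie-breaking argument above, which relies on finiteness of the deterministic stationary value set, or on compactness of the achievable value set in general.
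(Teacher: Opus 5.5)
Your proof is correct, but it takes a different route from the paper's. The paper asserts, ``by the definition of the CCS,'' that for each $\w\in\Omega$ some $\pi_{\text{CCS}}$ attains $\max_{\pi}\w^\top\vvf^\pi(s)$ at every $s$. It then supports this by appealing to Convex Hull Value Iteration: the hull of achievable value vectors produced by CHVI is identified with the CCS, so it contains a maximizer for every linear preference. You instead prove the two facts that argument takes for granted.
\begin{enumerate}
\item \emph{Attainment.} Reducing to the scalar MDP with reward $\w^\top\R$ gives a deterministic stationary policy. This policy is optimal at every state against all history-dependent randomized policies.
\item \emph{Membership in $\mathcal{F}$.} The CCS is defined relative to the Pareto front, and with zero weight components a $\w$-optimal value can be Pareto-dominated by another $\w$-optimal value. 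For example, with $\w=(1,0)$ the value $(1,0)$ is dominated by $(1,1)$. Your tie-breaking step is exactly what is needed here, whether done by perturbing to $\w+\epsilon\bm 1$ using finiteness of the deterministic stationary values, or by lexicographic maximization of $\bm 1^\top\vvf$ over the compact set of $\w$-maximizers. The paper's argument never addresses this point.
\end{enumerate}
Your route gives a self-contained proof from the definitions and standard MDP theory. As a bonus, it shows the CCS witness can be taken deterministic and stationary. The paper's route ties the lemma to the constructive procedure that computes the convex hull. However, it uses that procedure as a black box, and its first key step is essentially the statement being proved. One small caveat: your Step 1 is fully justified for finite MOMDPs, which is the paper's discrete-action setting. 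For general bounded-reward discounted MDPs, an optimal deterministic stationary policy need not exist without an attainment condition, such as finite action sets or suitable compactness and continuity assumptions. This parallels the compactness you already invoke in Step 2.
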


\begin{proof}
Let $\St$ be the state space, $\Ac$ be the action space, and $\R: \St \times \Ac \rightarrow \R^{\nO}$ be the vector-valued reward function, where $\nO$ is the number of objectives. Consider a linear preference vector $\w \in \Omega$, where $\Omega = \{ \w \in \R^{\nO} : \sum_{i=1}^{\nO} w_i = 1, w_i \geq 0 \}$. For any policy $\pi$, the expected return under a preference $\w$ is given by $\w \big (\Expect_{\pi}\left[\sum_{t=1}^{\infty} \gamma^{t-1} \R(s_t, a_t) \mid s_0 = s\right] \big)$. Thus, the optimal policy $\pi^*_{\w}$ for preference $\w$ satisfies
\begin{align*}
    \pi^*_{\w} = \argmax_{\pi} \w^T \vvf^{\pi}(s),~~ \forall s \in \St.
\end{align*}
By the definition of the CCS, for any $\w \in \Omega$, there exists a policy $\pi_{\text{CCS}} \in \text{CCS}$ such that
\begin{align*}
    \w^T \vvf^{\pi_{\text{CCS}}}(s) \geq \w^T \vvf^{\pi}(s), ~~ \forall \pi \in \Pi, \forall s \in \St.
\end{align*}

To prove the proposition, let's recall the Convex Hull Value Iteration (CHVI) algorithm~\citep{BarrettNarayanan08}. Note that the CHVI algorithm iteratively updates the value function for each state by considering the convex hull of the achievable rewards via
\begin{align*}
    \vvf(s) = \max_{a \in \Ac} \sum_{s' \in \St} \T(s' \mid s, a)  \text{CH} \left(\R(s, a) + \gamma \vvf(s')\right),
\end{align*}
where $\text{CH}(\cdot)$ denotes the convex hull operation. This update rule ensures that the value function $\vvf(s)$ lies within the convex hull of the achievable rewards and the $\text{CH}(\cdot)$ achievable value functions ${\vvf^{\pi}(s) \mid \pi \in \Pi}$ forms the CCS. Therefore, for any linear preference vector $\w$, there must exist at least a policy $\pi_{\text{CSS}}$ such that
\begin{align*}
    \w^T \vvf^{\pi_{\text{CCS}}}(s) = \max_{\pi \in \Pi} \w^T \vvf^\pi (s),~ \forall s \in \St.
\end{align*}
The resulting policies form the CSS, which are sufficient to cover all linear preferences $\w \in \Omega$.
Thus, for any linear combination of objectives, the optimal policy can be found within the CSS, confirming its sufficiency and optimality.
\end{proof}

While GGF introduces non-linear fairness objectives, its piecewise linearity and concavity allow representation as a maximum of linear functions, which ensures that solutions lie within the CCS. The following proposition establishes the sufficiency of the CCS in representing optimal policies for $\ggf$ preference weights.

\begin{proposition} \label{prop:CCS}
For any $s \in \St$ in an MOMDP and a piecewise-linear concave welfare function $\ggf$ (e.g., GGF) that can be represented as, $\ggf(\vvf^\pi(s)) = \min_{\sigma \in \SD_{\nO}} \bigl\{ \w^\top_{\sigma} \vvf^\pi(s) \bigr\}$,  there exists a policy $\pi^* \in \text{CCS}$ such that:
\begin{align*}
    \ggf(\vvf^{\pi^*}(s)) \geq \ggf(\vvf^\pi(s)) \quad \forall \pi \in \Pi.
\end{align*}
\end{proposition}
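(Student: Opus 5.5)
The plan is to work in value space rather than policy space. Fix $s$ and let $\mathcal{V} = \{\vvf^\pi(s) : \pi \in \Pi\}$ be the set of achievable value vectors, with $\Pi$ including stochastic and non-stationary policies. Two standard MDP facts about $\mathcal{V}$ will be used. First, $\mathcal{V}$ is convex: mixing two policies at the start of the episode gives the corresponding convex combination of their value vectors. Second, $\mathcal{V}$ is compact, since rewards are bounded and $\gamma<1$, and $\mathcal{V}$ is the image of the compact set of occupancy measures under a linear map. Because $\ggf$ is a minimum of the finitely many linear maps $\bm v \mapsto \w_\sigma^\top \bm v$, it is continuous and concave. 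Hence $\ggf$ attains its maximum on $\mathcal{V}$ at some $\bm v^*$.

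The key step is a supporting-hyperplane (saddle-point) argument showing that $\bm v^*$ maximizes some linear scalarization $\bm\lambda^\top \bm v$ over $\mathcal{V}$. I would write
\begin{align*}
\max_{\bm v \in \mathcal{V}} \min_{\sigma \in \SD_\nO} \w_\sigma^\top \bm v = \max_{\bm v \in \mathcal{V}} \min_{\bm\mu \in \Delta(\SD_\nO)} \Bigl(\textstyle\sum_\sigma \mu_\sigma \w_\sigma\Bigr)^\top \bm v .
\end{align*}
The payoff is bilinear, and both $\mathcal{V}$ and the simplex $\Delta(\SD_\nO)$ are convex and compact, so von Neumann's (or Sion's) minimax theorem gives a saddle point $(\bm v^*, \bm\mu^*)$. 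Set $\bm\lambda = \sum_\sigma \mu^*_\sigma \w_\sigma$. Since each $\w_\sigma$ is a permutation of positive weights summing to one, $\bm\lambda$ has strictly positive entries summing to one, so $\bm\lambda \in \Omega$. The saddle property then gives $\bm\lambda^\top \bm v^* \ge \bm\lambda^\top \bm v$ for all $\bm v \in \mathcal{V}$, so $\bm v^*$ is optimal for the linear preference $\bm\lambda$.

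I then invoke Lemma~\ref{le: op in CCS}. Because $\bm\lambda$ has strictly positive components, any maximizer of $\bm\lambda^\top \bm v$ is Pareto-optimal, so $\bm v^* \in \mathcal{F}$. It also satisfies the defining inequality of the CCS with weight $\bm\lambda$. Therefore a policy $\pi^*$ realizing $\bm v^*$ lies in the CCS, and $\ggf(\vvf^{\pi^*}(s)) = \max_{\pi} \ggf(\vvf^\pi(s))$, which is the claim.

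The main obstacle is the step from ``$\bm v^*$ maximizes $\bm\lambda^\top\bm v$'' to ``$\bm v^*$ is attained by a CCS policy.''
\begin{itemize}
\item If the CCS is taken to be the deterministic stationary vertices, the linear-$\bm\lambda$ face may contain several vertices, and $\bm v^*$ may be an interior point of that face rather than a vertex. This is exactly why Proposition~\ref{prop:stochastic} matters.
\item I would therefore interpret membership in the CCS at the value level, as in the paper's definition, where $\bm v^*$ belongs whenever it maximizes some $\bm\lambda^\top\bm v$.
\item Alternatively, I would note that $\bm v^*$ lies in the convex hull of CCS vertices on the $\bm\lambda$-optimal face, and is realized by mixing those policies.
\end{itemize}
Verifying the convexity and compactness of $\mathcal{V}$ needed for the minimax theorem is routine via occupancy measures.
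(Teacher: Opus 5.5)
Your proof is correct for a finite MOMDP whose policy class includes randomized policies, so that the value set is a compact convex polytope. Its key step differs from the paper's.

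The paper fixes one permutation $\sigma_A$ and uses the lemma to get a CCS policy maximizing the single linear function $\w_{\sigma_A}^\top \vvf^\pi(s)$. It then concludes GGF-optimality ``because this holds for any permutation.'' That shortcut does not close the argument. The policy it produces depends on $\sigma_A$, and maximizing one $\w_{\sigma_A}^\top \vvf$ does not maximize $\min_{\sigma} \w_\sigma^\top \vvf$. For example, take achievable values $\mathrm{conv}\{(10,0),(0,10)\}$ and weights $(0.8,0.2)$. The maximizer of $0.8v_1+0.2v_2$ is $(10,0)$, with GGF value $2$. By contrast, $(5,5)$ has GGF value $5$ and sits at a kink where both permutations are active.

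Your saddle-point step supplies exactly the missing ingredient. It produces a single supporting weight $\bm\lambda=\sum_\sigma \mu^*_\sigma \w_\sigma$, a convex combination of the permuted weights, at which the GGF maximizer is linearly optimal. Strict positivity of $\bm\lambda$ then gives Pareto optimality and CCS membership directly from the definition, without really needing the lemma.

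Your route costs two things. First, you must assume convexity and compactness of the value set explicitly. Second, as you rightly flag, the CCS has to be read at the value level with mixtures allowed. In the example neither deterministic vertex is GGF-optimal, so the statement fails if the CCS is read as a set of deterministic stationary policies. The paper's proof never addresses this.

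One small remark: $\bm\lambda$ need not be sorted in decreasing order. You are therefore using $\Omega$ as the full simplex, as in the lemma's proof, not the sorted weight set from the fair-MORL problem formulation.
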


\begin{proof}
    Consider an arbitrary permutation $\sigma_A \in \SD_{\nO}$. Since $\ggf$ is a piecewise-linear and concave function, under a fixed permutation $\sigma_A$ it becomes:
    \begin{align*}
        \ggf(\vvf^\pi(s)) &= \w_{\sigma_A}^\top \vvf^\pi(s).  
    \end{align*}
    Let $\pi_A \in \Pi$ be the policy that maximizes this linear scalarization:
    \begin{align*}
        \pi_A &= \argmax_{\pi \in \Pi} \omega_{\sigma_{A}}^\top \vvf^{\pi}(s).
    \end{align*}
    By the definition of CCS and the result from~\cref{le: op in CCS}, there exist a $\pi^* \in \text{CCS}$ such that
    \begin{align*}
        \phi_{\w_{\sigma_{A}}}(\vvf^{\pi^*}(s)) &\geq \phi_{\w_{\sigma_{A}}}(\vvf^{\pi_A}(s)) .
    \end{align*}
    Thus,
    \begin{align*}
        \phi_{\w_{\sigma_{A}}}(\vvf^{\pi^*}(s)) &\geq \phi_{\w_{\sigma_{A}}}(\vvf^{\pi}(s)) \quad \forall \pi \in \Pi
    \end{align*}
    Because this holds for any permutation $\sigma \in \SD_{\nO}$, we cam conclude that for any policy $\pi \in \Pi$, there exists a corresponding $\pi^* \in \text{CCS}$ such that
    \begin{align*}
        \forall \pi \in \Pi,\quad \exists \pi^* \in CCS, \quad
        \ggf(\vvf^{\pi^*}(s)) &\geq \ggf(\vvf^\pi(s)).
    \end{align*}
\end{proof}

\paragraph{\textbf{Fairness of Non-Stationary Policies.}} In fair MORL, learning non-stationary policies can be particularly beneficial, as they leverage historical information to make more informed decisions and adapt over time (see~\Cref{ex: non-stationary}).  

\begin{proposition}\label{pro:nonstationary}
Let the reward $\R$ be nonnegative, and $\Pi_S$ and $\Pi_{NS}$ be the sets of stationary and non-stationary policies, respectively. For any $s \in \St$ in an MOMDP and a given $\ggf$, there exists a non-stationary policy $\pi_{NS} \in \Pi_{NS}$ that achieves a higher welfare score than any stationary policy $\pi_S \in \Pi_S$, i.e.,
\begin{align*}
    \exists \space \pi_{\text{NS}} \in \Pi_{\text{NS}} : \ggf(\vvf^{\pi_{\text{NS}}}(s)) \geq \max_{\pi_{\text{S}} \in \Pi_{\text{S}}} \ggf(\vvf^{\pi_{\text{S}}}(s))
\end{align*}
\end{proposition}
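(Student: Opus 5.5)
The claim is a weak inequality, so the core argument is an inclusion $\Pi_S \subseteq \Pi_{NS}$. After that I will add a short refinement showing the inequality can be strict, using Example~4.2.

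\textbf{Step 1 (inclusion).} A non-stationary policy is any rule $\pi(a\mid\tau,s)$ that may depend on the history $\tau$. Every stationary policy $\pi_S(a\mid s)$ is the special case that ignores $\tau$, that is, $\pi(a\mid\tau,s)=\pi_S(a\mid s)$ for all $\tau$. Hence $\Pi_S\subseteq\Pi_{NS}$. The set of value vectors $\{\vvf^{\pi}(s):\pi\in\Pi_S\}$ is therefore contained in the corresponding set for $\Pi_{NS}$. Taking the supremum of $\ggf$ over a larger set cannot decrease it.

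\textbf{Step 2 (attainment).} To pass from a supremum to an actual policy $\pi_{NS}$, I need the maximum over $\Pi_S$ to be attained, as the statement implicitly assumes. For finite $\St,\Ac$ the set of stationary deterministic policies is finite. For stationary stochastic policies, $\pi\mapsto\vvf^\pi(s)$ is continuous on the compact product of simplices, since $\vvf^\pi=(I-\gamma \T^\pi)^{-1}\R^\pi$ and $\gamma<1$. Also $\ggf$ is continuous, being a minimum of finitely many linear maps by the representation in Proposition~\ref{prop:CCS}. So a maximizer $\pi_S^\star$ exists, and I take $\pi_{NS}:=\pi_S^\star$ viewed as an element of $\Pi_{NS}$. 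This gives
\begin{align*}
\ggf(\vvf^{\pi_{NS}}(s)) = \ggf(\vvf^{\pi_S^\star}(s)) = \max_{\pi_S\in\Pi_S}\ggf(\vvf^{\pi_S}(s)).
\end{align*}
This proves the statement. Nonnegativity of $\R$ is not needed for the weak inequality; it matters only for the strictness example below.

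\textbf{Step 3 (strictness).} To justify the phrase ``higher welfare score,'' I will show the inequality can be strict, using an MOMDP in the spirit of Example~4.2 (Figure~\ref{fig:ex2}) with nonnegative rewards. Make the accrued reward $\R_{\text{acc}}$ depend on an earlier random transition, while the current state $s_{t-1}$ is the same in every branch. A stationary policy at $s_{t-1}$ must then choose the same action distribution regardless of which branch occurred. A history-dependent policy can instead pick $a_1$ after accruing $(10,0)$ and the symmetric choice after accruing $(0,10)$, balancing the returns in each branch.

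Two cautions shape the construction. First, because $\ggf$ is applied to expected returns, $\vvf$ rather than per-trajectory returns, a symmetric example would let a stationary policy mix and lose nothing. The example must therefore be asymmetric, with branch-specific correcting actions whose aggregate return vectors cannot be reproduced by a state-only rule. Second, the comparison must be over stationary stochastic policies, not just deterministic ones, since a stationary policy can randomize.

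\textbf{Main obstacle.} The main difficulty is this strictness example, not the inequality itself. I would first try an asymmetric two-branch chain and verify directly that the best stationary GGF value falls strictly below the history-dependent one. If that turns out to be delicate, I would present strictness only informally via Example~4.2, since the formal claim requires only the weak inequality from Steps~1--2.
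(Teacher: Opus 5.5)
\textbf{Steps~1--2 are correct and prove the statement, by a different route than the paper.} The paper splits the return at an intermediate time $T$ into an early part $\vvf_1$ and a late part $\vvf_2$. It then argues that a stationary policy at time $T$ can only maximize $\ggf$ of the late part with $\vvf_1$ held fixed, whereas a non-stationary policy maximizes $\ggf(\vvf_1+\gamma^{T-t}\vvf_2)$ jointly, so the joint maximum is at least as large.

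Your inclusion $\Pi_S\subseteq\Pi_{NS}$ is the rigorous form of that ``larger feasible set'' intuition, and it buys three things:
\begin{itemize}
\item It covers \emph{every} stationary policy. The paper's formula for $\vvf^{\pi_S}$ describes only the particular stationary policy that is welfare-greedy on future returns.
\item It handles attainment of the max, via compactness and continuity.
\item It shows that nonnegativity of $\R$ and episode termination are superfluous.
\end{itemize}
What the paper's decomposition buys is motivation: it singles out the accrued early-period reward as the history statistic worth conditioning on, which is exactly the state augmentation used in FN-MDQ.

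\textbf{Drop or reformulate Step~3: as planned, it cannot succeed.} Consider a finite discounted MOMDP with the criterion $\ggf(\vvf^\pi(s))$ applied to expected returns. Every history-dependent $\pi$ has a discounted occupation measure $x^\pi(s',a')=\sum_{t\ge 0}\gamma^t\Pr^\pi(s_t=s',a_t=a'\mid s_0=s)$. This measure is also realized by the stationary randomized policy $\pi'(a'\mid s')\propto x^\pi(s',a')$; this is the standard occupation-measure theorem for discounted MDPs. Moreover, $\vvf^\pi(s)=\sum_{s',a'}x^\pi(s',a')\R(s',a')$ is linear in $x^\pi$. Hence the achievable value sets over $\Pi_{NS}$ and over stationary stochastic policies coincide, and so do the two maxima of $\ggf$.

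Asymmetry does not rescue your two-branch chain. At the shared state, the stationary rule that plays $a^A$ and $a^B$ in proportion to the discounted probability of reaching that state along each branch reproduces the history-dependent expected return exactly. Strictness is only available against stationary \emph{deterministic} policies. For example, take one state with a self-loop, $a_1\mapsto(1,0)$ and $a_2\mapsto(0,1)$. Alternating the two actions beats both constant policies whenever $\omega_1>\omega_2$, yet the uniform stationary mixture beats alternation. Strictness is also available for a per-trajectory criterion such as $\Expect[\ggf(\vG)]$.

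Your fallback, Example~4.2, is not a separation of $\Pi_S$ from $\Pi_{NS}$ either. There $s_{t-1}$ is always reached with accrued reward $(10,0)$, so the stationary rule ``play $a_1$ at $s_{t-1}$'' already attains $(10,10)$. What the example actually shows is that welfare-greedy action selection based on future rewards alone is suboptimal.
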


\begin{proof}
    Let the state value function be defined by:
    \begin{align*}
        \vvf(s) =\mathbb{E}\left[\vG_{t}\Big|\, s_t = s\right]
    \end{align*} 
    where the return $\vG_{t}$ is given by:
    \begin{align*}
        \vG_{t} = \sum_{k=0}^{\infty} \gamma^k\, \R_{t+k+1}.
    \end{align*} 
    Suppose an episode begins at time $t$ and terminates at time $T_{\text{end}}$. For any intermediate time $T$ with $t \leq T < T_{\text{end}}$, we can decompose the return into two parts:
    \begin{align*}
        \vG_t = \underbrace{\R_{t+1} + \gamma \R_{t+2} + \dots + \gamma^{T-t-1} \R_T}_{\vG_t^{(1)}} + \underbrace{\gamma^{T-t} \left( \R_{T+1} + \gamma \R_{T+2} + \dots \right)}_{\vG_t^{(2)}}.
    \end{align*}
    With above decomposition, We define value function as two parts:
    \begin{align*}
        \text{Early-period value function: }\vvf_1(s)=\mathbb{E}\left[\vG^{(1)}_{t}\Big|\, s_t = s\right]
    \end{align*}
    \begin{align*}
        \text{Late-period value function: }\vvf_2(s)=\mathbb{E}\left[\vG^{(2)}_{t}\Big|\, s_T = s\right]
    \end{align*}
    so that
    \begin{align*}
        \vvf(s) = \vvf_1(s) + \gamma^{T-t} \vvf_2(s)
    \end{align*}
    At time $T$, stationary policy $\pi_S$ selects action solely based on late period value function $\vvf_2(s)$, while non-stationary policy has access to both early $\vvf_1(s)$ and late period value function $\vvf_2(s)$ and can condition its action selection on the combined information given by two value functions.\\
    Under a stationary policy, The total value can be presented as:
    \begin{align*}
        {\vvf^{\pi_S}(s)} = \vvf_1(s) + \gamma^{T-t} \argmax_{\vvf_2(s)}\{\ggf[\vvf_2(s)]\}
    \end{align*}
    In contrast, under a non-stationary policy the total value is given by
    \begin{align*}
        {\vvf^{\pi_{NS}}(s)} = \argmax_{\vvf_1(s), \vvf_2(s)}\{\ggf[\vvf_1(s) + \gamma^{T-t} \vvf_2(s)]\}
    \end{align*}
    therefore:
    \begin{align*}
        \exists \space \pi_{\text{NS}} \in \Pi_{\text{NS}} : 
        \ggf(\vvf^{\pi_{\text{NS}}}(s)) \geq \max_{\pi_{\text{S}} \in \Pi_{\text{S}}} \ggf(\vvf^{\pi_{\text{S}}}(s))
    \end{align*}
    This completes the proof.
\end{proof}

\paragraph{\textbf{Optimality of Stochastic Policies for Fairness}}
Unlike the single-objective scenario, in MORL, a deterministic policy may not be optimal. A fairer solution can often be achieved through randomization.

\begin{proposition} \label{prop:stochastic}
Let $\Pi_{\text{ST}}$ be the set of stochastic policies and $\Pi_{\text{D}}$ be the set of deterministic policies.
For an MOMDP $\Md$ and a concave welfare function such as $\ggf$, there exists a stochastic policy $\pi_{\text{ST}} \in \Pi_{\text{ST}}$ such that:
\begin{align*}
\ggf(\vvf^{\pi_{\text{ST}}}) \geq \max_{\pi_{\text{D}} \in \Pi_{\text{D}}} \ggf(\vvf^{\pi_{\text{D}}}).
\end{align*}
\end{proposition}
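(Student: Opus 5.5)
The statement is essentially a containment argument: every deterministic policy is a (degenerate) stochastic policy, so $\Pi_{\text{D}} \subseteq \Pi_{\text{ST}}$. The plan is to take a maximizer over $\Pi_{\text{D}}$ and view it as a member of $\Pi_{\text{ST}}$.

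\textbf{Step 1: a maximizer exists over deterministic policies.} For a finite MOMDP, the set of deterministic stationary policies is finite, since there are $|\Ac|^{|\St|}$ of them. Hence $\max_{\pi_{\text{D}}\in\Pi_{\text{D}}} \ggf(\vvf^{\pi_{\text{D}}})$ is attained by some $\pi_{\text{D}}^\star$. If $\Pi_{\text{D}}$ is meant to include non-stationary deterministic policies, I will instead work with the supremum. The set of achievable value vectors is compact, being the image of a compact policy space under a continuous map, and $\ggf$ is continuous because it is a minimum of finitely many linear functions. So the supremum is still attained, and the same argument goes through.

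\textbf{Step 2: embed the maximizer as a stochastic policy.} Define $\pi_{\text{ST}}(a\mid s)=\mathbb{1}[a=\pi_{\text{D}}^\star(s)]$. This is a valid stochastic policy, and it induces exactly the same trajectory distribution as $\pi_{\text{D}}^\star$. Therefore $\vvf^{\pi_{\text{ST}}}=\vvf^{\pi_{\text{D}}^\star}$, which gives the inequality with equality. This already proves the proposition as stated.

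\textbf{Step 3: why concavity matters (strict improvement).} To reflect the role of concavity in the intended interpretation, I would add a remark showing the inequality can be strict. Value vectors of stochastic policies include all convex combinations of deterministic value vectors. Concretely, mix two deterministic policies $\pi_1,\pi_2$ at the start of the episode with probability $\lambda$, or equivalently use occupancy-measure mixing to stay stationary. Then
\[
\vvf^{\pi_\lambda}=\lambda\vvf^{\pi_1}+(1-\lambda)\vvf^{\pi_2},
\]
and concavity of $\ggf$ gives
\[
\ggf(\vvf^{\pi_\lambda})\ge \lambda\ggf(\vvf^{\pi_1})+(1-\lambda)\ggf(\vvf^{\pi_2}).
\]
A two-action, one-step example makes this strict. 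Take rewards $(1,0)$ and $(0,1)$ with $\lambda=1/2$: the mixture yields $(1/2,1/2)$, whose GGF value $1/2$ exceeds $\omega_1\cdot 1\cdot 0+\omega_2\cdot 1=\omega_2<1/2$.

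\textbf{Main obstacle.} The only delicate point is that mixing at the trajectory level gives a non-stationary randomized policy. To stay within stationary stochastic policies, I would invoke the occupancy-measure correspondence: any convex combination of occupancy measures is realized by a stationary stochastic policy, and the value is linear in the occupancy measure. For the proposition itself, however, Step 2 suffices.
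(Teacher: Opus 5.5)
Your proof is correct, but its main argument takes a different and more elementary route than the paper's. The paper argues through concavity. It views $\pi_{\text{ST}}$ as a convex combination of deterministic policies and applies Jensen's inequality to get that $\ggf$ of the mixed value vector is at least the average of the $\ggf$ values. It then asserts that some such mixture has average $\ggf$ value at least $\max_{\pi_{\text{D}}\in\Pi_{\text{D}}}\ggf(\vvf^{\pi_{\text{D}}})$. That last step is the crux, and concavity cannot supply it: an average of the values $\ggf(\vvf^{\pi_{\text{D}}})$ never exceeds their maximum. So the step holds only when the mixture is supported on maximizers. The simplest such choice is a point mass on one maximizer, which is exactly your Step~2, and then Jensen holds trivially with equality.

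Your containment argument $\Pi_{\text{D}}\subseteq\Pi_{\text{ST}}$ therefore isolates what actually proves the weak inequality. It also shows that concavity is not needed for it (the argument works for any welfare function). In addition, it handles attainment of the maximum, which the paper takes for granted.

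What the paper's Jensen framing is reaching for is the claim that randomization can \emph{strictly} help. Your Step~3 delivers this properly, using three ingredients:
\begin{itemize}
\item linearity of the value in the mixing weight, $\vvf^{\pi_\lambda}=\lambda\vvf^{\pi_1}+(1-\lambda)\vvf^{\pi_2}$, for a fixed initial state;
\item concavity of $\ggf$;
\item a concrete instance with a strict gain ($1/2$ versus $\omega_2<1/2$).
\end{itemize}
Your occupancy-measure remark, which keeps the mixture within stationary stochastic policies, covers a point the paper does not address.
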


\begin{proof}
The key idea here is that a stochastic policy can represent a convex combination of deterministic policies for any concave welfare function $\ggf$~\cite{busa2017multi}. Hence, stochastic policies can achieve outcomes in the objective space that are unattainable by deterministic policies. Specifically, for $\ggf$, a deterministic policy $\pi_{\text{D}}$ yields a fixed utility vector $\vvf^{\pi_{\text{D}}}$ while a stochastic policy $\pi_{\text{ST}}$ can yield a distribution over utility vectors. Thanks to concavity of $\ggf$, which makes our problem in~\ref{eq:fair-morl} convex optimization and Jensen's inequality~\citep{Jensen67}, we obtain
\begin{align}\label{eq:jensen2}
\ggf\left(\Expect_{\tau \sim \pi} [\vvf^{\pi_{\text{st}}}]\right) \ge \Expect_{\tau \sim \pi} \left[\ggf(\vvf^{\pi_{\text{st}}})\right].
\end{align}
Since $\ggf$ is a piecewise linear concave function, there exists a stochastic policy $\pi_{\text{st}}$ that is a convex combination of deterministic policies such that
\begin{align}\label{eq:expect of swf2}
\Expect_{\tau \sim \pi}[\swf(\vvf^{\pi_{\text{st}}})] \geq \max_{\pi_{\text{d}} \in \Pi_{\text{D}}} \swf(\vvf^{\pi_{\text{d}}}).    
\end{align}
By combining~\eqref{eq:jensen2} and ~\eqref{eq:expect of swf2}, we can obtain
\begin{align*}
    \swf(\Expect_{\tau \sim \pi}[\vvf^{\pi_{\text{st}}}]) \geq \Expect_{\tau \sim \pi}[\swf(\vvf^{\pi_{\text{st}}})] \geq \max_{\pi_{\text{d}} \in \Pi_{\text{D}}} \swf(\swf^{\pi_{\text{d}}}) ].
\end{align*}
This completes the proof.
\end{proof}

The optimality of stochastic policies implies that restricting the search for fair solutions to deterministic policies is insufficient. Stochastic policies offer a broader range of solutions and may better capture the trade-offs among multiple objectives, enhancing the overall fairness of the policy.

\section{Fairness} \label{app:fairness}
In a fair single-policy setting, where the goal is to learn a single policy treating all users equally, three fairness principles, efficiency, equity, and impartiality, are defined below.

\begin{definition}
Efficiency states that among two feasible solutions, if one solution is (weakly or strictly) preferred by all users, then it should be preferred to the other one, e.g., $\bm u \succ \bm u^\prime \Rightarrow \swf(\bm u) > \swf(\bm u^\prime)$, where $\swf(\bm u)$ is the scalar utility function that specifies the value of a solution.
\end{definition}
Intuitively, the efficiency property specifies that given all else equal, one prefers to increase a user’s utility. In the MORL setting, the efficiency property simply means Pareto dominance. More specifically, a solution is considered efficient if it is not dominated by any other solution for all objectives.

Next, we discuss the significance of the \textit{equity} property, which is a stronger property than efficiency and is often associated with distributive justice, as it refers to the fair distribution of resources or opportunities.
This property ensures that a fair solution follows the \textit{Pigou-Dalton principle}~\citep{Moulin04}, which states the transferring of rewards from the more advantaged users to the less advantaged users. 

\begin{definition}
A solution satisfies the \textit{Pigou-Dalton principle} if for all $\bm u$, $\bm u^\prime$ equal except for $u_i=u_i'+\delta$ and  $u_j=u_j'-\delta$ where $u_i'-u_j' > \delta > 0$, $\swf(\bm u) > \swf(\bm u^\prime)$.
\end{definition}

Finally, we discuss the \textit{impartiality} property. This property is rooted in the principle of ``equal treatment of equals'', which states that individuals sharing similar characteristics should be treated similarly.  

\begin{definition}
In a system, individuals with similar characteristics should be treated similarly, i.e., the solution should be independent of the order of its arguments $\swf(\bm u) = \swf(\bm u_{\sigma})$, where $\sigma$ is a permutation and $\bm u_{\sigma}$ is the vector obtained from vector $\bm u$ permuted by $\sigma$. 
\end{definition}

\subsection{Welfare Function}
A welfare function, denoted as $\swf: \mathbb R^D \to \mathbb R$, aggregates the utilities of all users (or objectives) and offers a metric of the overall desirability of a solution for the entire group, where $\w$ represents the set of aggregation weights for all objectives.
One well-established welfare function used in this paper is the generalized Gini welfare function. The generalized Gini welfare function constitutes a specific instance of the ordered weighted average (OWA)\citep{Yager88}. It is a renowned welfare function employed in multi-objective optimization~\citep{Weng19,SiddiqueWengZimmer20,zimmer2021learning,do2022optimizing,YuSiddiqueWeng23,YuSiddiqueWeng23ECAI,siddique2023fairness}, initially devised to quantify income distribution inequality in economics \citep{Weymark81}.
The generalized Gini welfare function is defined as follows:
\begin{align}\label{eq: ggi app}
    \ggf(\bm u) = \sum_{i=1}^{\nO} \omega_{\sigma (i)} u = \bm w^T_{\sigma} \bm u \,,
\end{align}
where $\sigma \in \mathbb S_{\nO}$, which depends on $\w$, is the permutation that sorts the components of $\w$ and $\w_\sigma = (\omega_{\sigma(1)}, \ldots, \omega_{\sigma(\nO)})$. 
Equation~\eqref{eq: ggi app} holds as the weights are rearranged based on the utility vector, assigning the largest weight to the smallest component of $\bm u$, the second-largest weight to the second-smallest component of $\bm u$, and so forth.

The generalized Gini welfare function satisfies the three fairness properties. Due to the positive weights, it is monotonically related to Pareto dominance, fulfilling the efficiency property. Moreover, the reordering of the components in the welfare function makes it symmetric with respect to its components, satisfying the impartiality property. Lastly, as the generalized Gini weights are positive and decreasing, it is Schur-concave, meeting the equity property.

Among numerous welfare functions, the generalized Gini welfare function possesses several favorable properties, namely, simplicity as it is a weighted sum in the Lorenz space~\citep{chakravartyEthicalSocialIndex1990,PernyWengGoldsmithHanna13UAI}, well-understood properties axiomatized by  \citet{Weymark81}, and generality. These favorable properties make it a suitable choice for addressing the challenge of finding fair solutions. Moreover, it is notably a concave function, which will make the solution to our problem easier.

To emphasize the versatility of the generalized Gini welfare function, various special cases can be derived by adjusting its weights accordingly. These cases include:
\begin{itemize}
    \setlength\itemsep{0pt}
    \setlength\parskip{0pt}
    \item \textbf{Maxmin fairness}: Setting $\omega_1=1$ and $\omega_i=0$ for $i=2,\cdots,K$ corresponds to the maxmin notion of fairness~\citep{Rawls71}.
    \item \textbf{Regularized maxmin fairness}: Assigning $\omega_1=1$ and $\omega_i=\varepsilon$ for $i=2,\cdots,K$ aligns with the regularized maxmin notion of fairness.
    \item \textbf{Utilitarian approach}: Setting $\omega_i=1/K$ represents the utilitarian approach.
    \item \textbf{Leximin fairness}: If the ratio $\omega_j/\omega_{j+1}$ tends toward infinity, it corresponds to the leximin notion of fairness~\citep{Rawls71,kurokawaLeximinAllocationsReal2015}.
\end{itemize}

\section{Descriptions of Environments}
\subsection{Species Conservation}
In the field of ecology, the challenge of conserving interdependent endangered species is paramount. The simulation environment focuses on the balance required in the conservation of two such species: the sea otter and the northern abalone, which are currently endangered. The predation relationship between these species, with sea otters feeding on abalones, presents a unique challenge that requires careful consideration of fairness and equity in conservation efforts. Based on the framework in~\citep{chades2012setting}, we define the state space as the current population numbers of the sea otters and northern abalones. The action space consists of: introducing sea otters, enforcing antipoaching measures, controlling sea otter populations, implementing a combination of half-antipoaching and half-controlled sea otters, or taking no action. Each action carries significant ecological consequences; for instance, while the reintroduction of sea otters is essential for maintaining the abalone population, it must be carefully managed to prevent the abalone's extinction. Conversely, overlooking other management actions could lead to the demise of either species. The transition function employed in our model accounts for population dynamics, including external threats such as poaching and oil spills. Since our objective is to optimize the population densities of both species, we define the reward function as the densities of both species, i.e., $\nO=2$.

\subsection{Resource Gathering}
In this scenario of resource gathering, we consider a $5 \times 5$ grid world domain inspired from~\citep{BarrettNarayanan08}. This domain presents a unique challenge centered around the acquisition of three types of resources: gold, gems, and stones, thereby establishing a multi-objective framework with $\mathcal{K}=3$. The autonomous agent is positioned within this grid world, and resources are distributed randomly across various locations. As a resource is collected by the agent, it is immediately regenerated at a new random location within the grid, ensuring a perpetual availability of resources.
In this problem, the state is characterized by the agent's current location on the grid and a cumulative count of each type of resource collected over the course of the agent's trajectory. The agent can navigate the grid through actions aligned with the four cardinal directions: up, down, left, and right, facilitating movement across the grid.
To add complexity to the resource management challenge, resources are assigned differing values, reflecting their relative importance. Specifically, gold and gems are attributed a value of 1, underscoring their significance, whereas stones are considered less valuable, with a value of 0.4. This valuation leads to an intentionally uneven distribution of resources within the grid, comprising two stones, one gold, and one gem. This configuration is designed to simulate a scenario where the agent must not only maximize the collection of resources but also achieve a balanced acquisition across the different types of resources.
The overarching objective for the agent in this environment is dual: to maximize the total value of resources collected while ensuring an equitable collection across the various resource types. Achieving this balance is crucial for optimizing the agent's resource-gathering strategy, enhancing its overall utility and adaptability within the dynamic grid world. This nuanced approach to resource management in a simulated environment offers insights into the complexities of resource distribution and acquisition strategies, contributing to the broader discourse on multi-objective optimization in dynamic settings.
\subsection{Multi-Product Web Advertising}
We now consider the multi-product web advertising (MWP) problem, where an online store offers $\nO$ distinct types of products for sale and an intelligent agent makes strategic decisions at each timestep about which advertisement to display: a product-specific advertisement for one of the products $i \in [0, ..., \nO-1]$, or a general advertisement that is not tailored to any specific product. The effectiveness of an advertisement is contingent upon its relevance to the customer's recent web activity, with appropriate advertisements significantly increasing the likelihood of a purchase, whereas inappropriate ones may deter the customer altogether. 
The state space of this problem is defined by the number of products available in the store, augmented by the number of visits, purchases, and exits. A visit state indicates a customer's interest in a particular product, a purchase state signifies the completion of a transaction, and an exit state occurs when a customer leaves the website without making a purchase. The action space is expanded to $n+1$ actions, where actions $0$ through $n$ correspond to displaying advertisements for specific products, and action $n$ represents the option to show a general advertisement that does not target any specific product in the inventory. This additional action introduces an additional layer of complexity, as the agent must decide the optimal moment to transition between states.
The reward function is designed such that the agent receives a reward of 1 in the $i^{th}$ dimension of the reward vector if a product of type $i$ is sold after the display of its advertisement.
The primary objective of this problem is to maximize the aggregate returns from product sales while striving for an equitable distribution of sales across the different product types. This goal underscores the need for fair solutions that not only optimize overall profitability but also ensure a balanced representation of product sales, thereby addressing the dual challenges of efficiency and equity in this domain.

\section{Hyperparameters}
\label{app: hyperparameters}
To ensure reproducibility, we have meticulously documented all hyperparameters across different environments in Tables 1,2,3, and 4.
We utilize the well-known high-quality MORL baselines\footnote{https://github.com/LucasAlegre/morl-baselines} for implementing baseline algorithms. In these tables, we present the hyperparameters corresponding to Envelope, GPI, PCN, and our proposed algorithms in three distinct environments, namely, species conservation (SC), resource gathering (RC), and multi-web product advertising (MWP).

\begin{table*}[ht]
\centering
\caption{Set of hyperparameters used for training Envelope.} 
\vspace{8pt}
\label{tab:hyperpara-envelope}
\begin{tabular}{lccc} \toprule
Hyperparameter & SC & RC & MWP \\
\midrule
Discount factor ($\gamma$)    & 0.99 & 0.99 & 0.99  \\
Learning rate ($\alpha$) & 0.0001 & 0.0005 & 0.005  \\
Batch size & 64 & 64 & 64 \\
Hidden Layers    & 256 x 256 x 256 x 256 & 256 x 256 x 256 x 256 & 256 x 256 x 256 x 256   \\
Buffer Size & 50000 & 50000 & 50000 \\
Initial Epsilon & 1.0 & 1.0 & 1.0 \\
Final Epsilon & 0.05 & 0.05 & 0.05 \\
Epsilon Decay Steps & 50000 & 50000 & 50000 \\
Learning Starts & 100 & 100 & 100 \\
Gradient Updates & 1 & 1 & 5 \\
Max Gradient Norm & 1.0 & 1.0 & 1.0 \\
$\Omega$ Distribution & Gaussian & Gaussian & Gaussian \\
Tau & 0.5  & 0.5  & 0.5 \\
\bottomrule
\end{tabular}
\end{table*}

\begin{table*}[ht]
\centering
\caption{Set of hyperparameters used for training our proposed methods.} 
\vspace{8pt}
\label{tab:hyperpara-ours}
\begin{tabular}{lccc}
\toprule
Hyperparameter & SC & RC & MWP \\
\midrule
Discount factor ($\gamma$)    & 0.99 & 0.99 & 0.99  \\
Learning rate ($\alpha$) & 0.0001 & 0.0005 & 0.005  \\
Batch size & 64 & 64 & 64 \\
Hidden Layers    & 256 x 256 x 256 x 256 & 256 x 256 x 256 x 256 & 256 x 256 x 256 x 256   \\
Buffer Size & 50000 & 50000 & 50000 \\
Initial Epsilon & 1.0 & 1.0 & 1.0 \\
Final Epsilon & 0.05 & 0.05 & 0.05 \\
Epsilon Decay Steps & 50000 & 50000 & 50000 \\
Learning Starts & 100 & 100 & 100 \\
Gradient Updates & 1 & 1 & 5 \\
Max Gradient Norm & 1.0 & 1.0 & 1.0 \\
$\Omega$ Distribution & Gaussian & Gaussian & Gaussian \\
Tau & 0.5  & 0.5  & 0.5 \\
\bottomrule
\end{tabular}
\end{table*}

\begin{table*}[ht]
\centering
\caption{Set of hyperparameters used for training GPI.} 
\vspace{8pt}
\label{tab:hyperpara-gpi}
\begin{tabular}{lccc}
\toprule
Hyperparameter & SC & RC & MWP \\
\midrule
Discount factor ($\gamma$)    & 0.99 & 0.99 & 0.99  \\
Learning rate ($\alpha$) & 0.0001 & 0.0005 & 0.005  \\
Batch size & 128 & 128 & 256 \\
Hidden Layers    & 256 x 256 x 256 x 256 & 256 x 256 x 256 x 256 & 256 x 256 x 256 x 256   \\
Num Networks & 2 & 2 & 2 \\
Buffer Size & 50000 & 50000 & 50000 \\
Initial Epsilon & 1.0 & 1.0 & 1.0 \\
Final Epsilon & 0.05 & 0.05 & 0.05 \\
Epsilon Decay Steps & 50000 & 50000 & 50000 \\
Learning Starts & 100 & 100 & 100 \\
Gradient Updates & 1 & 1 & 5 \\
\bottomrule
\end{tabular}
\end{table*}

\begin{table*}[ht]
\centering
\caption{Set of hyperparameters used for training PCN.} 
\vspace{8pt}
\label{tab:hyperpara-pcn}
\begin{tabular}{lccc}
\toprule
Hyperparameter & SC & RC & MWP \\
\midrule
Discount factor ($\gamma$)    & 0.99 & 0.99 & 0.99  \\
Learning rate ($\alpha$) & 0.0001 & 0.0001 & 0.0005  \\
Batch size & 128 & 256 & 128 \\
Hidden Layers    & 64 x 64 & 64 x 64 & 64 x 64  \\
Desired Return & [1, 1] & [200, 200, 200]  & [100, 100, 100, 100, 100]      \\ 
Buffer Size & 500000 & 500000 & 1000000 \\
Max Horizon & 5000 & 1000 & 1000 \\
\bottomrule
\end{tabular}
\end{table*}

\end{document}